\begin{document}

\twocolumn[
    \aistatstitle{Boosted Off-Policy Learning}
    \aistatsauthor{Ben London \And Levi Lu \And Ted Sandler \And Thorsten Joachims}
    \aistatsaddress{Amazon \And Amazon \And Groundlight.ai \And Amazon}
]


\begin{abstract}
We propose the first boosting algorithm for off-policy learning from logged bandit feedback. Unlike existing boosting methods for supervised learning, our algorithm directly optimizes an estimate of the policy's expected reward. We analyze this algorithm and prove that the excess empirical risk decreases (possibly exponentially fast) with each round of boosting, provided a ``weak" learning condition is satisfied by the base learner. We further show how to reduce the base learner to supervised learning, which opens up a broad range of readily available base learners with practical benefits, such as decision trees. Experiments indicate that our algorithm inherits many desirable properties of tree-based boosting algorithms (e.g., robustness to feature scaling and hyperparameter tuning), and that it can outperform off-policy learning with deep neural networks as well as methods that simply regress on the observed rewards.
\end{abstract}


\section{INTRODUCTION}
\label{sec:intro}

Boosting algorithms \citep{schapire:book12} have been a go-to approach for supervised learning problems, where they have been highly successful across a wide range of applications. Not only have they been the winning method for many Kaggle competitions \citep{chen:kdd16}, a recent empirical study  \citep{yang:kdd20} shows that gradient-boosted trees \citep{friedman:techreport99,mason:nips99} are the most successful model across the OpenML \citep{vanschoren:kdde13,feurer:corr19} tasks, providing the best classification accuracy on $38.60\%$ of the datasets. In contrast, their closest competitor, multi-layer perceptrons, achieve top performance on only $20.93\%$ of the datasets. Beyond pure predictive accuracy, boosted ensembles are also considered robust to feature scaling and hyperparameter tuning, and generally easier to train than neural networks, which often makes them a more practical choice for real-world applications \citep{grinsztajn:corr22}.

While boosting has enjoyed much success in supervised learning, a growing number of applications---such as search, recommendation and display advertising---fall outside the realm of supervised learning. Arguably, these applications are better formulated as contextual bandit problems, since the feedback (i.e., supervision) one observes depends on the actions that are taken. In practice, training often happens offline, using logged bandit feedback. For example, search ranking algorithms are usually trained on logged click data rather than in real time on live web traffic. This offline setting necessitates an \emph{off-policy} approach to learning, since the data used for training is collected by a different policy than the one being trained. To date, many off-policy learning algorithms have been proposed \citep{strehl:nips10,dudik:icml11,bottou:jmlr13,swaminathan:jmlr15,swaminathan:nips15,joachims:iclr18,wu:icml18,ma:aistats19,kallus:jasa19,london:icml19,chen:nips19,faury:aaai20,jeunen:kdd20}, yet none have explored a boosting approach.

In this paper, we derive the first boosting algorithm designed specifically for off-policy learning of contextual bandit policies. The algorithm, which we call \emph{BOPL} (for \emph{boosted off-policy learning}), sequentially constructs an \emph{ensemble policy}---comprised of a linear combination of predictors (e.g., decision trees)---by directly optimizing an estimate of the policy's expected reward---which is, after all, the quantity of interest when the policy is deployed. This \emph{policy optimization} approach stands in contrast to methods that indirectly derive a policy by regressing on the observed rewards, since better reward prediction (in terms of squared error) does not necessarily result in a better policy \citep{beygelzimer:kdd09}.

The BOPL algorithm is built upon a rigorous theoretical foundation. We first prove that its learning objective is \emph{smooth} (i.e., has a Lipschitz gradient), and then use this property to derive a gradient boosting algorithm. Our smoothness analysis has the advantage of giving us a closed-form expression for the ensemble weight of any predictor, and an upper bound on the learning objective at each round of boosting. To address optimization issues, we derive a variant of BOPL, called \emph{BOPL-S}, that optimizes a surrogate objective, which upper-bounds BOPL's objective and can be convex. Unlike some existing off-policy methods \citep{dudik:icml11,bottou:jmlr13,swaminathan:jmlr15,swaminathan:nips15,joachims:iclr18,london:icml19}, which require models to be differentiable in a fixed and enumerable set of parameters, BOPL only requires a \emph{base learner} that, at each round, trains a predictor to approximate the gradient with respect to the current ensemble. We show how this base learning objective can be reduced to a weighted regression or binary classification problem, which can be solved by off-the-shelf supervised learning algorithms. Moreover, we prove an upper bound on the excess empirical risk (that is, the empirical suboptimality) that decreases with each round of boosting as long as the base learner can produce a nontrivial predictor (akin to a ``weak" learning condition). Under certain conditions, the convergence rate can be exponentially decreasing in the number of rounds. When coupled with concentration and uniform convergence bounds, our excess empirical risk bound yields a bound on the excess population risk.

To evaluate the effectiveness of our approach, we conduct experiments on four public datasets. We find that BOPL outperforms boosted reward regression on three of these, thus illustrating that it can be advantageous to perform policy optimization, since it directly optimizes an estimate of the quantity we care about, the expected reward. We also find that boosted ensemble policies are competitive with neural network-based policies---while requiring less tuning, shorter training time and fewer computing resources---thereby demonstrating that boosted off-policy learning is a compelling alternative to deep off-policy learning.

\section{RELATED WORK}
\label{sec:related_work}

Of the extensive literature on learning from logged bandit feedback \citep{strehl:nips10,dudik:icml11,bottou:jmlr13,swaminathan:jmlr15,swaminathan:nips15,wu:icml18,ma:aistats19,kallus:jasa19,london:icml19,chen:nips19,faury:aaai20,jeunen:kdd20}, one particularly influential prior work, BanditNet \citep{joachims:iclr18}, can be viewed as the deep learning analog of our boosting approach. The authors argue that complex policy classes can overfit the logged propensities. To combat this phenomenon, they propose optimizing a self-normalized estimator, which they show is equivalent to optimizing an unnormalized estimator with translated rewards. For this reason, we explicitly allow rewards to be negative so as to accommodate negative translations---which we find to be critically important in our experiments.

The connection between early work on boosting \citep{kearns:stoc89,schapire:mlj90,freund:ic95,freund:jcss97} and functional gradient descent was formalized by \citet{friedman:techreport99} and \citet{mason:nips99}. This spawned a number of methods known collectively as \emph{gradient boosting}. One notable descendent of this line of work is XGBoost \citep{chen:kdd16}, which has become a popular choice for practitioners due to its speed and efficacy with minimal parameter tuning. XGBoost derives from a second-order Taylor approximation of an arbitrary loss function, and is optimized for regression tree base learners. While our boosting objective could in theory be approximated by XGBoost, our smoothness-based derivation gives us a closed-form expression for the ensemble weight of any base learner (not just regression trees) and an empirical risk bound, thus motivating a custom algorithm.

Learning from logged bandit feedback can be cast as a form of cost-sensitive classification, wherein labels have different costs depending on the context. Boosting algorithms for cost-sensitive classification have been proposed \citep{fan:icml99,abe:kdd04,appel:corr16}. However, the primary difference between cost-sensitive learning and learning from bandit feedback is that, in the former, it is assumed that all costs are known \emph{a priori}, whereas in the latter, only the cost of the selected action is observed.

The prior works that are most related to our own involve boosting with bandit feedback. Boosting has been applied to online multiclass classification with bandit feedback \citep{chen:icml14,zhang:aistats19}, online convex optimization with bandit feedback \citep{brukhim:alt21} and online reinforcement learning \citep{abel:corr16,brukhim:corr21}. The primary difference between these works and our own is that they consider an online setting in which the learning algorithm can interact directly with the environment and observe the corresponding outcome. In contrast, we consider an offline setting in which the learner cannot interact with the environment, and must therefore rely on logged interactions to estimate how the learned policy will perform online. The latter learning problem is inherently counterfactual, which is why off-policy corrections (e.g., importance weighting) are needed.

\section{PRELIMINARIES}
\label{sec:prelims}

Let $\Contexts$ denote a set of \emph{contexts}, and let $\Actions$ denote a set of \emph{actions} (sometimes referred to as \emph{arms}).\footnote{Though it is often the case that $\Actions$ depends on the context, to simplify notation, we assume that $\Actions$ is static.} We are interested in learning a \emph{policy}, $\Policy$, which defines a (stochastic) mapping from contexts to actions. Given a context, $\Context \in \Contexts$, we use $\Policy(\Context)$ to denote the policy's conditional probability distribution on $\Actions$, and $\Policy(\Action \| \Context)$ to denote the conditional probability of a given $\Action \in \Actions$. If $\Policy$ is deterministic, its distribution is a delta function.

A policy interacts with the environment in the following process. At each interaction, the environment generates a context, $\Context \in \Contexts$, according to a stationary distribution, $\ContextDist$. The policy responds by selecting (sampling) an action, $\Action \by \Policy(\Context)$, and consequently receives a stochastic \emph{reward}, $\Reward(\Context, \Action) \in \Reals$, which measures how good the selected action is for the given context. Importantly, we only observe reward for actions the policy selects. This partial supervision is referred to as \emph{bandit feedback}. We view the reward as a random function, $\Reward : \Contexts \times \Actions \to \Reals$, drawn from a stationary distribution, $\RewardDist$. Note that we do not assume that the rewards are nonnegative, to allow for the possibility that they could be \emph{baselined} (i.e., translated). This has been shown to be a crucial tool in preventing the so-called \emph{propensity overfitting} problem with complex model classes \citep{joachims:iclr18}. 

We want to find a policy that maximizes expected reward. In the sequel, it will be more convenient to think in terms of minimization, so we instead seek a policy with minimum expected \emph{negative} reward, which we call \emph{risk}:
\begin{equation}
    \Risk(\Policy) \defeq
        \Ep_{\Context \by \ContextDist}
        \Ep_{\Action \by \Policy(\Context)} 
        \Ep_{\Reward \by \RewardDist}
        [ -\Reward(\Context, \Action) ] .
\label{eq:risk}
\end{equation}
Note that an optimal policy is one that always selects an action with maximum mean reward, $\argmax_{\Action \in \Actions} \Ep_{\Reward \by \RewardDist}[ \Reward(\Context, \Action) ]$. However, such a policy may not be in the class of policies under consideration; and even if one is, it may be impossible to find given finite training data. We discuss this further in \cref{sec:off_policy_learning}, which motivates our off-policy learning objective.

\subsection{Softmax Ensemble Policies}
\label{sec:softmax_ensemble_policies}

Let $\Predictors \subseteq \{ \Contexts \times \Actions \to \Reals \}$ denote a class of \emph{predictors}, each of which maps context-action pairs to real-valued scores. For instance, $\Predictors$ could be a class of decision trees. It will at times be more convenient to think of a predictor, $\Predictor \in \Predictors$, as a vector-valued function that, given $\Context$, outputs the scores for all $\Action \in \Actions$ simultaneously, denoted by $\Predictor(\Context) \in \Reals^{\card{\Actions}}$.

For a collection of $\Rounds$ predictors, $\Predictor_1, \dots, \Predictor_{\Rounds} \in \Predictors$, with associated weights, $\EnsembleWeight_1, \dots, \EnsembleWeight_{\Rounds} \in \Reals$, let
\begin{equation}
    \Ensemble(\Context, \Action)
        \defeq \sum_{\Round=1}^{\Rounds} \EnsembleWeight_{\Round} \Predictor_{\Round}(\Context, \Action)
\label{eq:ensemble}
\end{equation}
denote the \emph{ensemble prediction}. We use
$
\Ensembles \defeq \big\{
    (\Context, \Action) \mapsto \Ensemble(\Context, \Action) :
    \forall \Round, \,
    \Predictor_{\Round} \in \Predictors, \,
    \EnsembleWeight_{\Round} \in \Reals
\big\}
$
to denote the class of size-$\Rounds$ ensembles, and the shorthand $\Ensemble \in \Ensembles$ to denote a member of this class.

Since we are not interested in predictions, but rather in a policy that selects actions, we use the following \emph{softmax} transformation. It can be applied to individual predictors or ensemble predictors. In particular, for a given ensemble, $\Ensemble \in \Ensembles$, and $\InvTemp \geq 0$, the corresponding \emph{softmax ensemble policy} is
\begin{equation}
    \EnsemblePolicy(\Action \| \Context)
    \defeq \Policy(\Action \| \Context ; \Ensemble, \InvTemp)
    \defeq \frac{
        \exp\left( \InvTemp \Ensemble(\Context, \Action) \right)
    }{
        \sum_{\Action' \in \Actions} \exp\left( \InvTemp \Ensemble(\Context, \Action') \right)
    } .
\label{eq:softmax_ensemble_policy}
\end{equation}
We use
$
\EnsemblePolicies \defeq \big\{
    (\Context, \Action) \mapsto \Policy(\Action \| \Context ; \Ensemble, \InvTemp) : \Ensemble \in \Ensembles
\big\}
$
to denote the class of softmax ensemble policies (for a given $\InvTemp \geq 0$), and the shorthand $\EnsemblePolicy \in \EnsemblePolicies$ to denote a member of this class.

The hyperparameter $\InvTemp$ (known as the \emph{inverse temperature}) is only used for notational convenience, as $\InvTemp \to \infty$ transforms the softmax into an \emph{argmax}. Importantly, for any $\InvTemp < \infty$, the softmax is differentiable, Lipschitz and (as we will later show) smooth, which are useful properties for optimization. For any softmax ensemble with finite $\InvTemp \neq 1$, an equivalent policy with $\InvTemp = 1$ can be obtained by rescaling the ensemble weights. Thus, during learning, we set $\InvTemp = 1$.

\subsection{Off-Policy Learning}
\label{sec:off_policy_learning}

Assume that we have collected data using an existing policy (not necessarily a softmax), $\LogPolicy$, which we call the \emph{logging policy}. Our only requirement for $\LogPolicy$ is that it has \emph{full support}; meaning, $\LogPolicy(\Action \| \Context) > 0$ for all $\Context \in \Contexts$ and $\Action \in \Actions$. For $i = 1, \dots, \DataSize$ interactions, we log: the context, $\Context_i \by \ContextDist$; the selected action, $\Action_i \by \LogPolicy(\Context_i)$; the \emph{propensity} of the selected action, $\Propensity_i \defeq \LogPolicy(\Action_i \| \Context_i)$; and the observed reward, $\Reward_i \defeq \Reward(\Context_i, \Action_i)$, where $\Reward \by \RewardDist$. The resulting dataset, $\Data \defeq (\Context_i, \Action_i, \Propensity_i, \Reward_i)_{i=1}^{\DataSize}$, can be used to train and evaluate new policies.

Recall that our goal is to minimize the risk (\cref{eq:risk}), and that the policy that achieves this is easy to derive if we have a perfect model of the mean rewards. In particular, if our hypothesis space is the class of softmax ensemble policies, $\EnsemblePolicies$, then an optimal policy is given by $\Ensemble(\Context, \Action) = \Ep_{\Reward \by \RewardDist}[ \Reward(\Context, \Action) ]$ and $\InvTemp \to \infty$. This motivates a strategy wherein we train an ensemble to predict the mean reward---an approach sometimes called \emph{reward regression}, or \emph{Q-learning} in the reinforcement learning literature. If we assume that the observed rewards are Gaussian perturbations of the mean reward, then a (nonlinear) least-squares regression makes sense:
\begin{equation}
    \min_{\Ensemble \in \Ensembles} \, \frac{1}{\DataSize} \sum_{i=1}^{\DataSize} (\Ensemble(\Context_i, \Action_i) - \Reward_i)^2 .
\label{eq:reward_regression_objective}
\end{equation}
If $\Ensembles$ contains the mean reward function, then this optimization will find an optimal ensemble (hence, optimal policy) in the limit of $\DataSize \to \infty$.

Unfortunately, in realistic scenarios, neither of these assumptions hold, and the relationship between regression error and expected reward can quickly become vacuous \citep{beygelzimer:kdd09}. Though $\Ensembles$ may be a very expressive function class, it is still unlikely that it contains the mean reward function. Further, while logged data may be abundant in industrial settings (such as search or recommendation engines), it may never be large enough; or it may be impractical to train on an extremely large dataset.

The key drawback of reward regression is that it does not directly optimize the quantity we actually care about: the risk (or, expected reward) of the learned policy when deployed. To see why, note that a reward predictor can improve its fit of the logged data and yet still not produce a policy with lower risk (i.e., higher expected reward). This can happen if the fit improves on actions that the new policy is unlikely to select. More perverse is the case where a trade-off is made that improves the fit on actions the new policy is unlikely to select while deteriorating the fit on actions \emph{it is} likely to select. In this case, reducing the squared error on the logged data can result in a worse policy.

We therefore consider a different approach based on \emph{policy optimization}. Unlike reward regression, policy optimization fits a policy to directly minimize risk, $\min_{\Policy \in \Policies} \Risk(\Policy)$. The true risk is unobservable, but given data, $\Data$, we can optimize an empirical estimate, $\EmpRisk(\Policy, \Data)$. This yields an \emph{empirical risk minimization} (ERM) strategy, $\min_{\Policy \in \Policies} \EmpRisk(\Policy, \Data)$.

The choice of risk estimator is crucial; if the estimator is biased, we may end up minimizing the wrong objective. Fortunately, we can obtain an unbiased estimate using the \emph{inverse propensity scoring} (IPS) estimator,
\begin{equation}
    \EmpRisk(\Policy, \Data) \defeq \frac{1}{\DataSize} \sum_{i=1}^{\DataSize} -\Reward_i \, \frac{\Policy(\Action_i \| \Context_i)}{\Propensity_i} .
\label{eq:ips_estimator}
\end{equation}
When $\LogPolicy$ has full support, it is straightforward to verify that $\Ep_{\Data}[\EmpRisk(\Policy, \Data)] = \Risk(\Policy)$. However, if the propensities can be very small, then the IPS estimator has large variance. For this reason, it is common practice to either design a logging policy whose propensities are lower-bounded, or to truncate the \emph{importance weights}, $\Policy(\Action_i \| \Context_i) / \Propensity_i$ \citep{ionides:jcgs08}. Alternatively, we could use an estimator that better balances the bias-variance trade-off, such as \emph{self-normalizing IPS} \citep{swaminathan:nips15} or \emph{doubly robust} \citep{dudik:icml11}. For simplicity, we will proceed with the regular IPS estimator, noting that more sophisticated estimators are complementary to our proposed approach.

In the following section, it will be convenient to view the empirical risk as the average \emph{loss} of an ensemble. For $i = 1, \dots, \DataSize$, let
$
    \Loss_i(\Ensemble)
    \defeq -\frac{\Reward_i}{\Propensity_i} \Policy(\Action_i \| \Context_i ; \Ensemble, \InvTemp)
$
denote the loss of $\Ensemble$ with respect to the $i\nth$ example. Using this notation, our learning objective is
\begin{equation}
    \min_{\EnsemblePolicy \in \EnsemblePolicies} \EmpRisk(\EnsemblePolicy, \Data) =
    \min_{\Ensemble \in \Ensembles} \frac{1}{\DataSize} \sum_{i=1}^{\DataSize} \Loss_i(\Ensemble) .
\label{eq:policy_learning_objective}
\end{equation}

\section{BOOSTED OFF-POLICY LEARNING}
\label{sec:boosted_policy_learning}

We will approach the optimization problem in \cref{eq:policy_learning_objective} using a greedy strategy known as \emph{boosting}, which can be viewed as coordinate descent or functional gradient descent \citep{schapire:book12}. Boosting proceeds in \emph{rounds}, wherein at each round, $\Round = 1, \dots, \Rounds$, the goal is to select a new predictor, $\Predictor_{\Round} \in \Predictors$, and corresponding weight, $\EnsembleWeight_{\Round} \in \Reals$, to add to our current ensemble, $\Ensemble[\Round-1]$, such that the empirical risk is minimized. Formally, the optimization problem at round $\Round$ is
\begin{align}
    \min_{
        \substack{\Predictor_{\Round} \in \Predictors, \\ \EnsembleWeight_{\Round} \in \Reals
    }}
        \EmpRisk(\EnsemblePolicy[\Round], \Data)
    = 
    \min_{
        \substack{\Predictor_{\Round} \in \Predictors, \\ \EnsembleWeight_{\Round} \in \Reals
    }} \,
        \frac{1}{\DataSize} \sum_{i=1}^{\DataSize} \Loss_i(\Ensemble[\Round-1] + \EnsembleWeight_{\Round} \Predictor_{\Round}) .
\label{eq:boosting_objective}
\end{align}

In the following, we will ignore the $\InvTemp$ hyperparameter of the softmax, essentially assuming $\InvTemp = 1$.

\subsection{Deriving the BOPL Algorithm}
\label{sec:algorithm}

\begin{algorithm*}
    \caption{Boosted Off-Policy Learning (BOPL)}
    \label{alg:boosted_policy_learning}
    \begin{algorithmic}[1]
    \Require{predictor class, $\Predictors$; base learner; rounds, $\Rounds \geq 1$; scale, $\NormConst > 0$}
    \State $\Ensemble[0] \gets 0$
    \For{$\Round = 1, \dots, \Rounds$}
        \State $\Predictor_{\Round} \gets \argmax_{\Predictor \in \Predictors} \ab{
            \frac{1}{\DataSize} \sum_{i=1}^{\DataSize} \frac{\Reward_i}{\Propensity_i} \EnsemblePolicy[\Round-1](\Action_i \| \Context_i) (\ActionOneHot_i - \EnsemblePolicy[\Round-1](\Context_i))\T \Predictor(\Context_i)
        }$
        \Comment{base learner}
        \Statex \quad\quad\quad\quad s.t. ~ $\frac{1}{\DataSize} \sum_{i=1}^{\DataSize} \frac{\ab{\Reward_i}}{\Propensity_i} \norm{\Predictor(\Context_i)}^2 = \NormConst$
        \State $\EnsembleWeight_{\Round} \gets \frac{2}{\DataSize \NormConst} \sum_{i=1}^{\DataSize} \frac{\Reward_i}{\Propensity_i} \EnsemblePolicy[\Round-1](\Action_i \| \Context_i) (\ActionOneHot_i - \EnsemblePolicy[\Round-1](\Context_i))\T \Predictor_{\Round}(\Context_i)$
        \State $\Ensemble[\Round] \gets \Ensemble[\Round-1] + \EnsembleWeight_{\Round} \Predictor_{\Round}$
    \EndFor
    \end{algorithmic}
\end{algorithm*}

Due to space constraints, we provide only a sketch of the derivation here, deferring the full derivation to \cref{sec:derivation_boosted_policy_learning}. The crux of the derivation (and subsequent analysis) is the \emph{smoothness} of the loss function, $\Loss_i$. Informally, a differentiable function is $\Smoothness$-smooth if its gradient is Lipschitz. In \cref{sec:smoothness}, we prove a new upper bound on the smoothness coefficient of the softmax function---which is, to our knowledge, the tightest such bound, and may be of independent interest. Using this result, we prove that $\Loss_i$ is $\frac{\ab{\Reward_i}}{2 \Propensity_i}$-smooth.

Having established smoothness, we construct a recursive upper bound for each $\Loss_i(\Ensemble[\Round])$ that isolates the influence of $\EnsembleWeight_{\Round}$ and $\Predictor_{\Round}$:
\begin{align}
    \Loss_i(\Ensemble[\Round])
    ~\leq~
    &   ~\Loss_i(\Ensemble[\Round-1]) \\
    &   - \frac{\Reward_i}{\Propensity_i} \EnsemblePolicy[\Round-1](\Action_i \| \Context_i) (\ActionOneHot_i - \EnsemblePolicy[\Round-1](\Context_i))\T (\EnsembleWeight_{\Round} \Predictor_{\Round}(\Context_i)) \\
    &    + \frac{\ab{\Reward_i}}{4 \Propensity_i} \norm{\EnsembleWeight_{\Round} \Predictor_{\Round}(\Context_i)}^2 ,
\end{align}
where $\ActionOneHot_i$ denotes a \emph{one-hot} encoding of the logged action, $\Action_i$, as a vector. Averaging over $i = 1, \dots, \DataSize$, we obtain a recursive upper bound on the empirical risk, $\EmpRisk(\EnsemblePolicy[\Round], \Data)$.

From there, we obtain a closed-form expression for the ensemble weight that minimizes the upper bound, for any given predictor:
\begin{equation}
    \EnsembleWeight_{\Round}^\star \defeq
    \frac{
        \frac{2}{\DataSize} \sum_{i=1}^{\DataSize} \frac{\Reward_i}{\Propensity_i} \EnsemblePolicy[\Round-1](\Action_i \| \Context_i) (\ActionOneHot_i - \EnsemblePolicy[\Round-1](\Context_i))\T \Predictor_{\Round}(\Context_i)
    }{
        \frac{1}{\DataSize} \sum_{i=1}^{\DataSize} \frac{\ab{\Reward_i}}{\Propensity_i} \norm{\Predictor_{\Round}(\Context_i)}^2
    } .
\end{equation}
Observe that the numerator is proportional to a weighted average,
\begin{equation}
    \sum_{i=1}^{\DataSize} \ImportanceWeight_i \sgn(\Reward_i) \Big(
        \Predictor_{\Round}(\Context_i, \Action_i)
        -
        \Ep_{\Action \by \EnsemblePolicy[\Round-1](\Context_i)}[\Predictor_{\Round}(\Context_i, \Action)]
    \Big) ,
\label{eq:weak_learning_condition}
\end{equation}
where $\ImportanceWeight_i \propto \frac{2 \ab{\Reward_i}}{\DataSize \Propensity_i} \EnsemblePolicy[\Round-1](\Action_i \| \Context_i) \geq 0$. If the weighted-average difference between $\Predictor_{\Round}(\Context_i, \Action_i)$ and the mean of $\Predictor_{\Round}(\Context_i, \Action)$ over $\Action \by \EnsemblePolicy[\Round-1](\Context_i)$ is nonzero, then $\EnsembleWeight_{\Round}^\star \neq 0$ and $\Predictor_{\Round}$ will contribute to the ensemble. This is analogous to AdaBoost's \emph{weak learning} condition: boosting can proceed as long as the base learner performs better than random guessing (i.e., accuracy greater than $1/2$) under a weighted empirical distribution. In fact, when $\Predictor_{\Round}$ is a $\TwoClass$-valued classifier, our weak learning condition coincides with AdaBoost's (see \cref{sec:boosting_via_binary_classification}). If the weak learning condition is not satisfied, then the new predictor adds no value to the ensemble, and boosting should terminate.

Plugging $\EnsembleWeight_{\Round}^\star$ into the recursive upper bound, we find that the predictor that minimizes the upper bound is
\begin{equation}
    \autofiteqn{
    \Predictor_{\Round}^\star
    \in \argmax_{\Predictor \in \Predictors} \,
    \frac{
        \left( \frac{1}{\DataSize} \sum_{i=1}^{\DataSize} \frac{\Reward_i}{\Propensity_i} \EnsemblePolicy[\Round-1](\Action_i \| \Context_i) (\ActionOneHot_i - \EnsemblePolicy[\Round-1](\Context_i))\T \Predictor(\Context_i) \right)^2
    }{
         \frac{1}{\DataSize} \sum_{i=1}^{\DataSize} \frac{\ab{\Reward_i}}{\Propensity_i} \norm{\Predictor(\Context_i)}^2
    } .
    }
\end{equation}
Critically, $\Predictor_{\Round}^\star$ is \emph{invariant to scaling}; meaning, for any $c > 0$, $c \Predictor_{\Round}^\star$ is still optimal. Thus, we can fix its scale by constraining $\frac{1}{\DataSize} \sum_{i=1}^{\DataSize} \frac{\ab{\Reward_i}}{\Propensity_i} \norm{\Predictor(\Context_i)}^2 = \NormConst$, for any $\NormConst > 0$, and simply maximize the numerator. This results in a constrained optimization problem that we refer to as the \emph{base learning objective}, which is solved by a \emph{base learner} for the given class of predictors, $\Predictors$. We give two examples of base learners in \cref{sec:base_learners}.

The resulting boosting algorithm, dubbed \emph{BOPL}, is given in \cref{alg:boosted_policy_learning}. Line 3 is the base learning objective. In practice, an early stopping condition can be inserted at the end of each iteration. If the gradient or $\EnsembleWeight_{\Round}$ is zero, then no further progress can be made. Alternatively, one could stop when a validation metric indicates overfitting. One could also apply regularization to the base learner or the ensemble weights; we discuss this in \cref{sec:regularization}.

\subsection{Analysis of BOPL}
\label{sec:analysis}

Using the analysis from our smoothness-based derivation (\cref{sec:derivation_boosted_policy_learning}), we can upper-bound BOPL's excess empirical risk; that is, the learned policy's suboptimality relative to an empirically optimal policy. The proof is deferred to \cref{sec:proof_emp_risk_bound}. 

\begin{theorem}
\label{th:emp_risk_bound}
Given a dataset, $\Data$, let $\OptEmpRisk \defeq \inf_{\Policy} \EmpRisk(\Policy, \Data)$ denote the minimum empirical risk, and $\ExcessEmpRisk_0 \defeq \EmpRisk(\EnsemblePolicy[0], \Data) - \OptEmpRisk$ the excess empirical risk of the initial (uniformly random) ensemble policy, $\EnsemblePolicy[0]$. If \cref{alg:boosted_policy_learning} is run for $\Rounds > 0$ rounds with $\NormConst > 0$, producing policy $\EnsemblePolicy$, then
\begin{equation}
    \EmpRisk(\EnsemblePolicy, \Data) - \OptEmpRisk
    \leq \ExcessEmpRisk_0 \exp\left(
        -\frac{\NormConst}{4 \ExcessEmpRisk_0} \sum_{\Round=1}^{\Rounds} \EnsembleWeight_{\Round}^2
    \right) .
\label{eq:emp_risk_bound}
\end{equation}
\end{theorem}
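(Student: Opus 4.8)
The plan is to chain together the per-round recursive upper bound that was derived in Section~\ref{sec:algorithm} and telescope it, then convert the resulting additive decrease into a multiplicative (geometric) decrease using the definition of $\ExcessEmpRisk_0$. Concretely, I would start from the averaged form of the recursive bound displayed just before the statement of the theorem: after plugging in the optimal weight $\EnsembleWeight_{\Round}^\star$ (so that $\EnsembleWeight_{\Round} = \EnsembleWeight_{\Round}^\star$ in the algorithm), the cross term and the quadratic term combine, and using $\frac{1}{\DataSize}\sum_i \frac{\ab{\Reward_i}}{\Propensity_i}\norm{\Predictor_{\Round}(\Context_i)}^2 = \NormConst$ one obtains a clean expression of the form
\begin{equation}
    \EmpRisk(\EnsemblePolicy[\Round], \Data) \leq \EmpRisk(\EnsemblePolicy[\Round-1], \Data) - \frac{\NormConst}{4}\,\EnsembleWeight_{\Round}^2 .
\label{eq:per_round_decrease}
\end{equation}
(The factor $\NormConst/4$ should fall out of the smoothness coefficient $\frac{\ab{\Reward_i}}{4\Propensity_i}$ in the quadratic term together with the normalization constraint; this is the one routine computation I would verify carefully.) Subtracting $\OptEmpRisk$ from both sides and writing $\ExcessEmpRisk_{\Round} \defeq \EmpRisk(\EnsemblePolicy[\Round], \Data) - \OptEmpRisk$ gives $\ExcessEmpRisk_{\Round} \leq \ExcessEmpRisk_{\Round-1} - \frac{\NormConst}{4}\EnsembleWeight_{\Round}^2$.

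Next I would turn this additive recursion into a multiplicative one. Since $\EmpRisk(\EnsemblePolicy[\Round], \Data)$ is nonincreasing in $\Round$ by \eqref{eq:per_round_decrease}, we have $\ExcessEmpRisk_{\Round} \leq \ExcessEmpRisk_{\Round-1} \leq \cdots \leq \ExcessEmpRisk_0$, so $\ExcessEmpRisk_{\Round-1} \leq \ExcessEmpRisk_0$ for every $\Round$. Dividing the recursion by $\ExcessEmpRisk_{\Round-1}$ (assuming it is positive — if it hits zero we are already optimal and can stop) and bounding $\ExcessEmpRisk_{\Round-1} \leq \ExcessEmpRisk_0$ in the denominator of the subtracted term yields
\begin{equation}
    \ExcessEmpRisk_{\Round} \leq \ExcessEmpRisk_{\Round-1}\left(1 - \frac{\NormConst\,\EnsembleWeight_{\Round}^2}{4\,\ExcessEmpRisk_0}\right).
\label{eq:mult_recursion}
\end{equation}
Iterating \eqref{eq:mult_recursion} from $\Round = 1$ to $\Rounds$ gives $\ExcessEmpRisk_{\Rounds} \leq \ExcessEmpRisk_0 \prod_{\Round=1}^{\Rounds}\left(1 - \frac{\NormConst\,\EnsembleWeight_{\Round}^2}{4\,\ExcessEmpRisk_0}\right)$, and applying the elementary inequality $1 - x \leq e^{-x}$ to each factor collapses the product into $\exp\!\big(-\frac{\NormConst}{4\ExcessEmpRisk_0}\sum_{\Round=1}^{\Rounds}\EnsembleWeight_{\Round}^2\big)$, which is exactly \eqref{eq:emp_risk_bound}.

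The main obstacle I anticipate is not in this telescoping argument, which is standard, but in justifying \eqref{eq:per_round_decrease} rigorously — in particular, confirming that the weight $\EnsembleWeight_{\Round}$ chosen in Line~4 of \cref{alg:boosted_policy_learning} is precisely the minimizer $\EnsembleWeight_{\Round}^\star$ of the per-round quadratic upper bound (it is, by inspection of the formulas), and tracking the constants so that the coefficient is exactly $\NormConst/4$ rather than something off by a factor of $2$. A secondary subtlety is the degenerate case $\ExcessEmpRisk_0 = 0$ (or $\ExcessEmpRisk_{\Round-1} = 0$ at some intermediate round), where the bound \eqref{eq:emp_risk_bound} should be read as $0 \leq 0$ and the division steps must be skipped; I would dispatch this as a trivial edge case at the start. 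One should also note the bound is only informative when $\NormConst\,\EnsembleWeight_{\Round}^2 \leq 4\,\ExcessEmpRisk_0$ so that the factors in \eqref{eq:mult_recursion} stay in $[0,1]$ — but this is automatic, since a negative factor would contradict $\ExcessEmpRisk_{\Round} \geq 0$, so no extra assumption is needed.
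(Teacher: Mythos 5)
Your proposal is correct, and its core is the same as the paper's: the per-round bound $\EmpRisk(\EnsemblePolicy[\Round],\Data)\leq\EmpRisk(\EnsemblePolicy[\Round-1],\Data)-\frac{\NormConst}{4}\EnsembleWeight_{\Round}^2$ follows exactly as you anticipate, since the scale constraint makes the denominator of the smoothness-derived decrease equal to $\NormConst$, and line 4 of \cref{alg:boosted_policy_learning} gives $\frac{1}{\DataSize}\sum_{i}\frac{\Reward_i}{\Propensity_i}\EnsemblePolicy[\Round-1](\Action_i\|\Context_i)(\ActionOneHot_i-\EnsemblePolicy[\Round-1](\Context_i))\T\Predictor_{\Round}(\Context_i)=\frac{\NormConst\EnsembleWeight_{\Round}}{2}$, so the subtracted term is $(\NormConst\EnsembleWeight_{\Round}/2)^2/\NormConst=\NormConst\EnsembleWeight_{\Round}^2/4$ — your constant is right. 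Where you diverge is the final conversion: the paper simply telescopes the additive decreases, subtracts $\OptEmpRisk$ to get $\EmpRisk(\EnsemblePolicy,\Data)-\OptEmpRisk\leq\ExcessEmpRisk_0-\frac{\NormConst}{4}\sum_{\Round}\EnsembleWeight_{\Round}^2$, and applies $c(1-z)\leq ce^{-z}$ a single time, which needs no monotonicity argument and no discussion of whether any factor is nonnegative. You instead build a per-round multiplicative contraction $\ExcessEmpRisk_{\Round}\leq\ExcessEmpRisk_{\Round-1}\bigl(1-\frac{\NormConst\EnsembleWeight_{\Round}^2}{4\ExcessEmpRisk_0}\bigr)$, which requires the additional (correct) observations that $\ExcessEmpRisk_{\Round-1}\leq\ExcessEmpRisk_0$ by monotone decrease and that each factor lies in $[0,1]$ (forced by $\ExcessEmpRisk_{\Round}\geq 0$), before applying $1-x\leq e^{-x}$ factor by factor. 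Both routes yield the identical bound; the paper's is shorter and avoids the edge-case bookkeeping you rightly flag (division by $\ExcessEmpRisk_{\Round-1}$, the $\ExcessEmpRisk_0=0$ degenerate case), while yours delivers a geometric per-round statement that is marginally more informative as an intermediate result but ultimately collapses to the same exponential.
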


The bound decreases as long as each $\EnsembleWeight_{\Round} \neq 0$, which happens when \cref{eq:weak_learning_condition} is nonzero---our analog of the weak learning condition. If this condition is met at every round, then the excess empirical risk eventually converges to a stationary point. Further, if $\ab{\EnsembleWeight_{\Round}} \geq \Advantage > 0$ for all $\Round = 1, \dots, \Rounds$, then the bound is exponentially decreasing in $\Rounds$.\footnote{As shown in \cref{sec:boosting_via_binary_classification}, when $\Predictor_{\Round}$ is $\TwoClass$-valued, this condition is equivalent to always having error rate less than some $\Advantage < 1/2$ under a weighted empirical distribution.}

Though $\NormConst$ appears to be a free parameter, recall that $\NormConst$ implicitly defines the scale of the base learner, and that any tuning of $\NormConst$ will be automatically compensated for in the ensemble weights. Thus, one should instead think of $\NormConst$ as a property of the base learner.

Note that the infimum, $\inf_{\Policy}$, used to define $\OptEmpRisk$ is not constrained to any particular class of policies. Indeed, the empirically optimal policy may not be a member of $\EnsemblePolicies$, the class of ensemble policies. Thus, there may implicitly be some approximation gap, $\inf_{\EnsemblePolicy \in \EnsemblePolicies} \EmpRisk(\EnsemblePolicy, \Data) - \OptEmpRisk$, governed by the expressive power of $\EnsemblePolicies$ and the hardness of the dataset, which is difficult to quantify. That being said, note that $\OptEmpRisk$ and $\ExcessEmpRisk_0$ are straightforward to compute for a given dataset.

In \cref{sec:excess_risk_bound}, we show how \cref{th:emp_risk_bound} can be used to upper-bound BOPL's excess risk with respect to an optimal policy (i.e., risk minimizer). By decomposing the excess risk into several error terms---estimation error, generalization error and excess empirical risk---we can leverage existing tools for concentration and uniform convergence to upper-bound the first two terms. We provide an example bound that is monotonically decreasing in $\DataSize$ and $\Rounds$ when the Rademacher complexity of $\Predictors$ is $\LittleO(1)$ and the weak learning condition holds.

\subsection{Surrogate Objective and BOPL-S}
\label{sec:surrogate_objective}

\begin{algorithm*}
    \caption{Boosted Off-Policy Learning with Surrogate Objective (BOPL-S)}
    \label{alg:surrogate_boosted_policy_learning}
    \begin{algorithmic}[1]
    \Require{predictor class, $\Predictors$; base learner; rounds, $\Rounds \geq 1$; scale, $\NormConst > 0$}
    \State $\Ensemble[0] \gets 0$
    \For{$\Round = 1, \dots, \Rounds$}
        \State $\Predictor_{\Round} \gets \argmax_{\Predictor \in \Predictors} \ab{
            \frac{1}{\DataSize} \sum_{i=1}^{\DataSize} \frac{\Reward_i \GradSwitch_{i,t}}{\Propensity_i} (\ActionOneHot_i - \EnsemblePolicy[\Round-1](\Context_i))\T \Predictor(\Context_i)
        }$
        \Comment{base learner}
        \Statex \quad\quad\quad\quad s.t. ~ $\frac{1}{\DataSize} \sum_{i=1}^{\DataSize} \frac{\ab{\Reward_i} \Smoothness_i}{\Propensity_i} \norm{\Predictor(\Context_i)}^2 = \NormConst$
        \Statex \quad\quad\quad\quad with ~
            $
            \GradSwitch_{i,t} \gets
                \begin{cases}
                \EnsemblePolicy[\Round-1](\Action_i \| \Context_i) & \text{if } \Reward_i < 0 \\
                1 & \text{if } \Reward_i \geq 0
                \end{cases}
            $
            \quad and \quad
            $
            \Smoothness_i \gets
                \begin{cases}
                \frac{1}{2} & \text{if } \Reward_i < 0 \\
                1 & \text{if } \Reward_i \geq 0
                \end{cases}
            $
        \State $\EnsembleWeight_{\Round} \gets \frac{1}{\DataSize \NormConst} \sum_{i=1}^{\DataSize} \frac{\Reward_i \GradSwitch_{i,t}}{\Propensity_i} (\ActionOneHot_i - \EnsemblePolicy[\Round-1](\Context_i))\T \Predictor_{\Round}(\Context_i)$
        \State $\Ensemble[\Round] \gets \Ensemble[\Round-1] + \EnsembleWeight_{\Round} \Predictor_{\Round}$
    \EndFor
    \end{algorithmic}
\end{algorithm*}

From the viewpoint of functional gradient descent \citep{friedman:techreport99,mason:nips99}, the objective function in \cref{eq:policy_learning_objective} may be difficult to optimize for two reasons. First, the softmax function is not convex in $\Ensemble$, and there are exponentially many local optima \citep{chen:nips19}, so convergence to a global optimum is not guaranteed. Second, the gradient of the softmax with respect to $\Ensemble$ vanishes quickly, meaning the optimization can get ``stuck" early on.

To circumvent these issues (in certain cases), we propose to boost a surrogate objective. For the time being, we assume that the $i\nth$ reward is nonnegative, $\Reward_i \geq 0$. Then, using the identity $-c (\ln z + 1) \geq -c z$, for all $c \in \Reals_+$ and $z \in \Reals$, we define a \emph{surrogate loss function},
\begin{equation}
    \LogLoss_i(\Ensemble[\Round])
    \defeq -\frac{\Reward_i}{\Propensity_i} ( \ln \Policy(\Action_i \| \Context_i ; \Ensemble[\Round]) + 1 )
    \geq \Loss_i(\Ensemble[\Round]) ,
\label{eq:surrogate_log_loss}
\end{equation}
variants of which have been used in the literature on policy optimization \citep{leroux:corr16,ma:aistats19,london:icml19,jeunen:kdd20}. When $\Reward_i \geq 0$, we have that $\LogLoss_i(\Ensemble[\Round])$ is convex in $\Ensemble[\Round]$ and upper-bounds $\Loss_i(\Ensemble[\Round])$; moreover, $\LogLoss_i(\Ensemble[\Round])$ has the same minimum as $\Loss_i(\Ensemble[\Round])$, and its gradient is zero only at the minimum. Like the original loss function, the surrogate loss function is smooth---albeit with a different coefficient, which we bound in \cref{sec:smoothness}. We can therefore establish a recursive upper bound on the surrogate loss; and hence, on the empirical risk.

However, we started out by assuming that the reward is nonnegative, and this is not always true---especially if we wish to support reward translation. When the reward is negative, $\LogLoss_i$ is no longer an upper bound for $\Loss_i$, nor is it convex. To handle this case, we can resort to the original loss function. For the $i\nth$ training example, we use $\Loss_i$ when the reward is negative, and $\LogLoss_i$ when the reward is nonnegative. The resulting surrogate objective is
\begin{equation}
    \autofiteqn[.9]{
    \SurrogateEmpRisk(\EnsemblePolicy[\Round], \Data)
    \defeq \frac{1}{\DataSize} \sum_{i=1}^{\DataSize}
        \1\{ \Reward_i < 0 \} \Loss_i(\Ensemble[\Round])
        + \1\{ \Reward_i \geq 0 \} \LogLoss_i(\Ensemble[\Round]) ,
    }
\label{eq:surrogate_objective}
\end{equation}
which is an upper bound for $\EmpRisk(\EnsemblePolicy[\Round], \Data)$ because $\Loss_i(\Ensemble[\Round]) \leq \LogLoss_i(\Ensemble[\Round])$ when $\Reward_i \geq 0$. In the presence of negative rewards, this function is still non-convex, and still suffers from vanishing gradients. However, our hope is that the examples with nonnegative rewards contribute enough gradient to keep the optimization moving in the right direction.

Applying our smoothness analysis to the surrogate objective, we derive \cref{alg:surrogate_boosted_policy_learning}, which we call \emph{BOPL-S} (for \emph{surrogate}). See \cref{sec:derivation_surrogate_boosted_policy_learning} for the full derivation.

\subsection{Base Learners}
\label{sec:base_learners}

\cref{alg:boosted_policy_learning,alg:surrogate_boosted_policy_learning} depend on a base learner to solve the optimization in line 3. This algorithm will depend, to some extent, on the class of predictors, $\Predictors$. We now sketch base learning reductions (full details given in \cref{sec:base_learning_reductions}) for two classes, real-valued functions (e.g., regression trees) and binary classifiers (e.g., decision stumps), both of which can be implemented by a variety of readily available, off-the-shelf tools. In both cases, we assume that $\Predictors$ is \emph{symmetric}; meaning, for every $\Predictor \in \Predictors$, we also have $-\Predictor \in \Predictors$.

\paragraph{Regression.}
Assume that $\Predictors$ is a symmetric class of real-valued functions, $\Predictors \subseteq \{ \Contexts \times \Actions \to \Reals \}$. Since we assume that $\Predictors$ is symmetric, we can omit the absolute value from the base learning objective. We then convert the constrained optimization problem to the following unconstrained one via Lagrangian relaxation:
\begin{align}
    \argmax_{\Predictor \in \Predictors} \min_{\Lagrange \in \Reals} ~
        &\frac{1}{\DataSize} \sum_{i=1}^{\DataSize} \frac{\Reward_i}{\Propensity_i} \EnsemblePolicy[\Round-1](\Action_i \| \Context_i) (\ActionOneHot_i - \EnsemblePolicy[\Round-1](\Context_i))\T \Predictor(\Context_i) \\
        &- \Lagrange \left( \frac{1}{\DataSize} \sum_{i=1}^{\DataSize} \frac{\ab{\Reward_i}}{\Propensity_i} \norm{\Predictor(\Context_i)}^2 - \NormConst \right) .
\end{align}
For every $\NormConst$, there exists a $\Lagrange$ that is optimal, and vice versa. Since $\NormConst$ is arbitrary, we can choose any $\Lagrange$ for the optimization. We therefore take $\Lagrange = 1/2$ and, after dropping terms that are irrelevant to the optimization and rearranging the expression, we obtain a weighted least-squares regression,
\begin{align}
    &\argmin_{\Predictor \in \Predictors} ~
        \sum_{i=1}^{\DataSize} \sum_{\Action \in \Actions} \ImportanceWeight_i \left( \PseudoLabel_{i,\Action} - \Predictor(\Context_i, \Action) \right)^2 ,
\end{align}
with nonnegative weights, $\ImportanceWeight_i \defeq \frac{\ab{\Reward_i}}{\Propensity_i}$, and pseudo-labels, 
\begin{equation}
    \PseudoLabel_{i, \Action} \defeq \sgn(\Reward_i) \EnsemblePolicy[\Round-1](\Action_i \| \Context_i) ( \1\{ \Action = \Action_i \} - \EnsemblePolicy[\Round-1](\Action \| \Context_i) ) .
\end{equation}
Note that this optimization does not explicitly constrain the scale of the predictor to a given $\NormConst$. Thus, for \cref{th:emp_risk_bound} to hold, the predictor can optionally be rescaled. The full details of the reduction and the resulting algorithm are given in \cref{sec:boosting_via_regression}.

\paragraph{Binary Classification.}
Assume that $\Predictors$ a symmetric class of binary classifiers, $\Predictors \subseteq \{ \Contexts \times \Actions \to \TwoClass \}$. For every example, $i \in \{1, \dots, \DataSize\}$, and action, $\Action \in \Actions$, we define a nonnegative weight,
\begin{equation}
    \ImportanceWeight_{i, \Action} \defeq \ab{
        \frac{\Reward_i}{\Propensity_i} \EnsemblePolicy[\Round-1](\Action_i \| \Context_i) ( \1\{ \Action = \Action_i \} - \EnsemblePolicy[\Round-1](\Action \| \Context_i) )
    } ,
\end{equation}
and a $\TwoClass$-valued pseudo-label,
\begin{equation}
    \PseudoLabel_{i, \Action}
    \defeq \sgn(\Reward_i) ( 2 \, \1\{ \Action = \Action_i \} - 1 ) .
\end{equation}
Then, using the identities
\begin{equation}
    \ImportanceWeight_{i, \Action} \PseudoLabel_{i, \Action}
    = \frac{\Reward_i}{\Propensity_i} \EnsemblePolicy[\Round-1](\Action_i \| \Context_i) ( \1\{ \Action = \Action_i \} - \EnsemblePolicy[\Round-1](\Action \| \Context_i) )
\end{equation}
and
\begin{equation}
    \1\{ \PseudoLabel_{i, \Action} \neq \Predictor(\Context_i, \Action) \}
    =
    \frac{1}{2} ( 1 - \PseudoLabel_{i, \Action} \Predictor(\Context_i, \Action) ) ,
\end{equation}
we derive an equivalence between the base learning objective and minimizing the weighted classification error:
\begin{align}
    &~ \argmax_{\Predictor \in \Predictors} \ab{ \frac{1}{\DataSize} \sum_{i=1}^{\DataSize} \frac{\Reward_i}{\Propensity_i} \EnsemblePolicy[\Round-1](\Action_i \| \Context_i) (\ActionOneHot_i - \EnsemblePolicy[\Round-1](\Context_i))\T \Predictor(\Context_i) } \\
    = &~ \argmin_{\Predictor \in \Predictors} \sum_{i=1}^{\DataSize} \sum_{\Action \in \Actions} \ImportanceWeight_{i, \Action} \1\{ \PseudoLabel_{i, \Action} \neq \Predictor(\Context_i, \Action) \} .
\end{align}
This optimization problem can be (approximately) solved by any learning algorithm for weighted binary classification. Since the predictions are $\TwoClass$-valued, every $\Predictor \in \Predictors$ satisfies $\norm{\Predictor(\Context)}^2 = \card{\Actions}$. Thus, the base learning scale constraint is automatically satisfied by $\NormConst = \frac{\card{\Actions}}{\DataSize} \sum_{i=1}^{\DataSize} \frac{\ab{\Reward_i}}{\Propensity_i}$, which is nonnegative as long as there is at least one nonzero reward in the dataset. The full details of the reduction and the resulting algorithm are given in \cref{sec:boosting_via_binary_classification}.

\section{EXPERIMENTS}
\label{sec:experiments}

The following experiments are designed to answer several questions: first, whether boosted ensemble policies are competitive with other complex policy classes, such as deep neural networks; second, whether BOPL's off-policy learning objective (\cref{eq:policy_learning_objective}) is more effective at training ensemble policies than the reward regression objective (\cref{eq:reward_regression_objective}); and finally, whether BOPL-S's surrogate learning objective (\cref{eq:surrogate_objective}) is easier to optimize.

\begin{table*}[ht]
    \caption{Average reward on the test data, averaged over 10 trials, with $95\%$ confidence intervals. Best scores for each dataset are bolded. Multiple bold cells occur when confidence intervals overlap with those of the best score.}
    \centering
    \begin{tabular}{lcccc}
    \toprule
     & \bf{Covertype} & \bf{Fashion-MNIST} & \bf{Scene} &  \bf{TMC2007-500} \\
    \midrule
    \bf{Logging} & $0.4907 ~ \pm 0.0024$ & $0.4708 ~ \pm 0.0009$ & $0.4317 ~ \pm 0.0042$ & $0.3187 ~ \pm 0.0029$ \\
    \midrule
    \bf{BRR-gb} & $0.9033 ~ \pm 0.0015$ & $0.8652 ~ \pm 0.0012$ & $0.6328 ~ \pm 0.0319$ & $0.7288 ~ \pm 0.0046$ \\
    \bf{BRR-xgb} & $0.9465 ~ \pm 0.0004$ & $0.8739 ~ \pm 0.0011$ & $0.6854 ~ \pm 0.0189$ & $\bf{0.7742 ~ \pm 0.0034}$ \\
    \bf{DRR} & $0.8878 ~ \pm 0.0011$ & $\bf{0.8947 ~ \pm 0.0019}$ & $0.7636 ~ \pm 0.0098$ & $\bf{0.7787 ~ \pm 0.0038}$ \\
    \bf{BanditNet} & $0.8565 ~ \pm 0.0019$ & $\bf{0.8921 ~ \pm 0.0025}$ & $\bf{0.7794 ~ \pm 0.0124}$ & $0.7603 ~ \pm 0.0076$ \\
    \midrule
    \bf{BOPL-regr} & $0.9504 ~ \pm 0.0005$ & $0.8893 ~ \pm 0.0016$ & $\bf{0.7778 ~ \pm 0.0105}$ & $0.7361 ~ \pm 0.0047$ \\
    \bf{BOPL-class} & $0.9262 ~ \pm 0.0010$ & $0.8775 ~ \pm 0.0012$ & $0.7651 ~ \pm 0.0112$ & $0.7131 ~ \pm 0.0039$ \\
    \bf{BOPL-S-regr} & $\bf{0.9531 ~ \pm 0.0007}$ & $0.8876 ~ \pm 0.0022$ & $\bf{0.7703 ~ \pm 0.0132}$ & $0.7399 ~ \pm 0.0030$ \\
    \bf{BOPL-S-class} & $0.9191 ~ \pm 0.0010$ & $0.8759 ~ \pm 0.0014$ & $\bf{0.7877 ~ \pm 0.0091}$ & $0.7339 ~ \pm 0.0051$ \\
    \bottomrule
    \end{tabular}
    \label{tab:final_results}
\end{table*}

\begin{table*}[ht]
    \caption{%
    Direct method (DM) reward estimation versus actual test reward.
    }
    \centering
    \begin{tabular}{lcccc}
    \toprule
     & \multicolumn{2}{c}{\bf{Scene}} & \multicolumn{2}{c}{\bf{TMC2007-500}} \\
     & \bf{DM Reward} & \bf{True Reward} & \bf{DM Reward} & \bf{True Reward} \\
    \midrule
    \bf{BRR-xgb} & $\bf{0.5126 ~ \pm 0.0103}$ & $0.6854 ~ \pm 0.0189$ & $\bf{0.7609 ~ \pm 0.0027}$ & $\bf{0.7742 ~ \pm 0.0034}$ \\
    \bf{BOPL-regr} & $0.4972 ~ \pm 0.0099$ & $\bf{0.7778 ~ \pm 0.0105}$ & $0.7023 ~ \pm 0.0034$ & $0.7361 ~ \pm 0.0047$ \\
    \bottomrule
    \end{tabular}
    \label{tab:dm_est_vs_true_reward}
\end{table*}

\subsection{Data and Methodology}
\label{sec:data_and_methodology}

We use the standard supervised-to-bandit conversion \citep{beygelzimer:kdd09} to simulate bandit feedback, since it allows us to compute accurate evaluation metrics from the ground-truth data. We report results on four public datasets, which have been used in related work \citep{swaminathan:jmlr15,london:icml19}: Covertype \citep{covertype}, Fashion-MNIST \citep{fashionmnist}, Scene \citep{scene} and TMC2007-500 \citep{tmc2007500}. Details about the datasets, partitioning, preprocessing and task reward structures are given in \cref{sec:dataset_details}.

For each method, we tune its associated hyperparameters via random search over a grid, whose limits were determined using the validation reward. Each hyperparameter configuration is used to train a policy on $10$ simulated bandit datasets; then, each policy is evaluated on the validation data. The configuration with the highest average validation reward is selected, and its associated policies are evaluated on the testing data. Final rewards are averaged over the $10$ trials (i.e., $10$ trained policies).

With the exception of the logging policy, we evaluate all policies as argmax policies (i.e., softmax with $\InvTemp \to \infty$), which deterministically selects the highest scoring (or, most likely) action. This eliminates the effect of exploration, and puts stochastic and deterministic policies on the same footing.

\subsection{Algorithms Tested}
\label{sec:algos_tested}

Both \cref{alg:boosted_policy_learning} (BOPL) and \cref{alg:surrogate_boosted_policy_learning} (BOPL-S) can be instantiated with two base learners, regression (suffix \emph{-regr}) or binary classification (\emph{-class}), for a total of four combinations. We use decision tree base learners throughout. Since boosted ensembles are complex model classes, we combat propensity overfitting using reward translation \citep{joachims:iclr18}.

To validate BOPL's learning objective, which directly optimizes an estimate of the ensemble policy's expected reward---rather than reward prediction error---we compare BOPL to several baselines based on reward regression (\cref{eq:reward_regression_objective}). The first two baselines, prefixed \emph{BRR} (boosted reward regression), use boosted regression trees, so as to control for the model class and isolate the effect of the learning objective. \emph{BRR-gb} is our own implementation of a ``vanilla" gradient boosting algorithm \citep{friedman:techreport99} for least-squares regression. \emph{BRR-xgb} is the highly optimized and state-of-the-art \emph{XGBoost} \citep{chen:kdd16} with the squared error loss. Both algorithms have a tunable learning rate (a.k.a. \emph{shrinkage} parameter), which effectively scales the ensemble weights\footnote{We tried a learning rate with BOPL and BOPL-S as well, but found that it was not helpful.}. The third baseline, \emph{DRR} (deep reward regression), trains a neural network regressor. Thus, DRR differs from BOPL in both the model class and learning objective. Note that none of the reward regression baselines require reward translation.

Finally, to focus on the question of boosting versus deep learning for off-policy learning, we compare to \emph{BanditNet} \citep{joachims:iclr18}. Like BOPL, BanditNet trains a softmax policy by optimizing the IPS estimator (\cref{eq:ips_estimator}), with reward translation to combat propensity overfitting. However, BanditNet's underlying model (i.e., action scoring function) is a neural network. Thus, this comparison controls for the learning objective and isolates the effect of the model class. 

Further details about the algorithm implementations and hyperparameters can be found in \cref{sec:algo_impl_details}.

\subsection{Results}
\label{sec:results}

\cref{tab:final_results} summarizes the results of our experiments. Overall, on two of the four datasets, at least one of the proposed BOPL variants performs best, and the other BOPL variants are typically competitive as well. In particular, BOPL-S-regr is always either the best or statistically tied with the best variant. The following investigates our stated research questions in greater detail.

\paragraph{Is boosting competitive with deep learning?}
While BOPL significantly outperforms all baselines on Covertype, the deep learning methods, DRR and BanditNet, outperform other methods on Fashion-MNIST. These results are not that surprising, as tree-based models are known to perform well on tabular data \citep{grinsztajn:corr22} and deep learning is known to excel at computer vision problems. We also find that BanditNet is statistically tied with BOPL on Scene, and BRR-xgb is statistically tied with DRR on TMC2007-500. It is worth noting, however, that the deep learning methods have more hyperparameters to tune, take much longer to train, and consume more resources---requiring GPUs to run in a reasonable amount of time, while the boosting methods only use CPUs. Indeed, boosted tree ensembles live up to to their reputation of performing robustly without complex tuning or massive computing resources. From this perspective, boosting not only performs competitively (if not better), it is often a more \emph{practical} choice.

\paragraph{How does BOPL compare to reward regression?}
Overall, DRR is the strongest reward regression baseline, and it outperforms BOPL on two datasets. However, BOPL outperforms BRR (which controls for the model class) on three out of four datasets. To understand why, consider the fact that BOPL directly maximizes the IPS estimate of the policy's expected reward, while BRR can be viewed as \emph{indirectly} optimizing a \emph{different} reward estimator: the so-called \emph{direct method} (DM). Instead of importance weighting, DM uses a reward regressor to predict the reward for each action selected by the target policy. Clearly, the policy that maximizes the DM estimate is one that always picks the best action according to the reward regressor. Therefore, whether BOPL or BRR performs better can now be understood in terms of the bias-variance trade-offs of the IPS and DM estimators.

To illustrate this point, \cref{tab:dm_est_vs_true_reward} analyzes the bias of the DM estimator (using BRR-xgb) on the two datasets where BOPL and BRR perform best and worst, respectively. On Scene, where BRR performs poorly, the DM estimate of the test reward is highly biased; it predicts $0.5126$ for the BRR-xgb policy when the true reward is $0.6854$, and it is even less accurate when estimating the reward of the BOPL-regr policy. This means that picking a policy based on the DM estimator is bound to be unreliable, which explains BRR's poor performance on Scene. On the other hand, the DM estimator is substantially less biased on TMC2007-500, which explains BRR's relatively good performance on that dataset. Unfortunately, it is difficult to determine when, and to what degree, the DM estimator will be biased, so it is difficult to say ahead of time whether reward regression will be successful. In contrast, the IPS estimator (used in BOPL) has reliable tools to control the bias-variance trade-off \citep{dudik:icml11,swaminathan:nips15}.

\paragraph{Is BOPL-S's surrogate objective easier to optimize?}
Recall that BOPL's learning objective is non-convex, and its gradients tend to vanish quickly. This motivated BOPL-S, which boosts a surrogate objective that partially addresses BOPL's issues. For any example with nonnegative rewards, BOPL-S's objective is convex and has a non-vanishing gradient. Unfortunately, examples with negative rewards are subject to the same problems as BOPL, but our hope is that correcting the nonnegative-reward examples will help. To validate this claim, we compare BOPL and BOPL-S on the Covertype data, since this is the dataset where BOPL-S has a statistically significant advantage. Since the rewards for this dataset are nonnegative, negative reward can only come from reward translation. We therefore evaluate two variants of BOPL-S: one with reward translation, and one without called \emph{BOPL-CS} since its surrogate objective is convex. \cref{fig:grad_vs_reward} plots the norm of the gradient, as well as the average reward on the training and testing data, as they evolve over a single run of each algorithm.\footnote{We use a single run to illustrate a typical optimization trajectory in which BOPL-S outperforms BOPL. Given the inter-run variability of the optimization, averaging over multiple runs might wash out any non-monotonicity in the trajectories of individual runs, which would hide the phenomena we are trying to present.} We find that BOPL’s gradient is almost monotonically decreasing in magnitude, meaning the optimization is largely decelerating, and its rewards have a bumpy upward trajectory compared to the surrogate objectives. In contrast, BOPL-CS’s gradient shoots up in early rounds, meaning the optimization is making fast progress. Indeed, we see BOPL-CS's rewards rise faster than BOPL's. BOPL-S’s gradient looks like a somewhat ``softened” version of BOPL-CS’s. This is likely because, with reward translation, some examples still use BOPL's objective. That being said, BOPL-S ultimately achieves the highest reward. Therefore, while convexity is helpful, reward translation may be more so.

\begin{figure}[ht]
    \centering
    \includegraphics[width=\columnwidth]{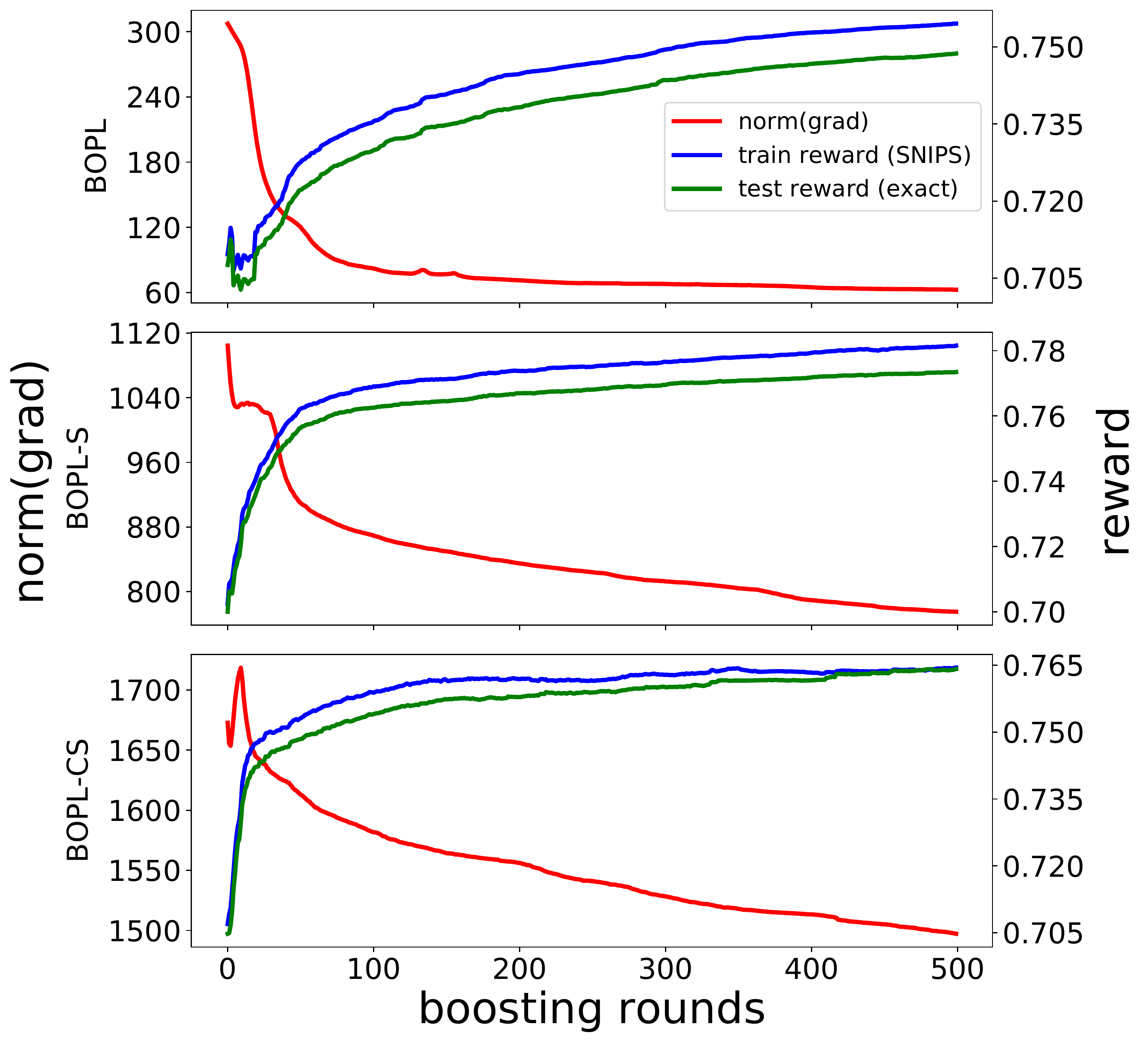}
    \caption{
        Plots of the gradient norm and train/test reward over a single run of BOPL, BOPL-S and BOPL-CS on the Covertype data. We use the self-normalized IPS estimator \citep{swaminathan:nips15} to estimate reward on the training data. Note that the hyperparameters used in these plots are not the optimized ones used in \cref{tab:final_results}.
    }
    \label{fig:grad_vs_reward}
\end{figure}

\section{CONCLUSIONS AND FUTURE WORK}
\label{sec:conclusions}

We proposed the first boosting algorithm for off-policy learning from logged bandit feedback. Both our theory and our empirical results highlight the importance of boosting the right off-policy objective, and suggest that our boosting algorithm is a robust alternative to deep off-policy learning. This novel ability to directly optimize a policy's expected reward via boosting opens a new space of research questions and further improvements. In particular, one could continue to refine the algorithm with more sophisticated surrogate objectives (e.g., \citep{leroux:corr16,chen:nips19}), optimize estimators beyond vanilla IPS (e.g., \citep{dudik:icml11,swaminathan:nips15}) so as to reduce variance, or examine off-policy regularizers that do not easily decompose over the ensemble (e.g., \citep{swaminathan:jmlr15}).

\subsubsection*{Acknowledgements}

We thank Emily Butler, Bob Kaucic, Gert Lanckriet and Mani Sethuraman from Amazon Music, as well as Leo Dirac, Avi Geiger and Paulina Varshavskaya from Groundlight.ai, for supporting this work.


\bibliographystyle{plainnat}
\bibliography{biblio}


\clearpage

\appendix
\onecolumn

\section{SMOOTHNESS}
\label{sec:smoothness}

Our algorithm derivations rely on a property of the loss function known as \emph{smoothness}.

\begin{definition}
\label{def:smoothness}
A differentiable function, $\SmoothFunc : \SmoothFuncDomain \to \Reals$, is $\Smoothness$-\emph{smooth} if, for all $\SmoothFuncInput, \SmoothFuncInput' \in \SmoothFuncDomain$,
\begin{equation}
    \norm{\grad \SmoothFunc(\SmoothFuncInput) - \grad \SmoothFunc(\SmoothFuncInput')}^2 \leq \frac{\Smoothness}{2} \norm[2]{\SmoothFuncInput - \SmoothFuncInput'}^2 .
\end{equation}
\end{definition}

In this appendix, we prove that two loss functions---one used in \cref{alg:boosted_policy_learning}; the other used in \cref{alg:surrogate_boosted_policy_learning}---are smooth. We begin with several technical lemmas, from which the smoothness proofs directly follow.

\begin{lemma}
\label{lem:spectral_norm}
For a symmetric, rank-$1$ matrix, $\mat{A} = \vec{x} \vec{x}\T$, with $\vec{x} \in \Reals^{n \times n}$, its spectral norm (i.e., largest eigenvalue) is $\norm[2]{\mat{A}} = \norm[2]{\vec{x}}^2$.
\end{lemma}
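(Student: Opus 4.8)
The plan is to exhibit an eigenvector/eigenvalue pair explicitly and then argue it is the largest eigenvalue. First I would dispose of the trivial case $\vec{x} = \vec{0}$, where $\mat{A} = \mat{0}$ and both sides of the claimed identity equal zero. So assume $\vec{x} \neq \vec{0}$. Then I would compute
\begin{equation}
    \mat{A} \vec{x} = (\vec{x} \vec{x}\T) \vec{x} = \vec{x} (\vec{x}\T \vec{x}) = \norm[2]{\vec{x}}^2 \vec{x} ,
\end{equation}
using associativity of matrix multiplication and the fact that $\vec{x}\T \vec{x} = \norm[2]{\vec{x}}^2$ is a scalar. Hence $\vec{x}$ is an eigenvector of $\mat{A}$ with eigenvalue $\norm[2]{\vec{x}}^2 > 0$.

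Next I would show that every other eigenvalue is zero. Since $\mat{A} = \vec{x}\vec{x}\T$ has rank $1$, its null space has dimension $n-1$; concretely, any $\vec{y}$ orthogonal to $\vec{x}$ satisfies $\mat{A}\vec{y} = \vec{x}(\vec{x}\T\vec{y}) = \vec{0}$, so the orthogonal complement of $\mathrm{span}\{\vec{x}\}$ is an $(n-1)$-dimensional eigenspace for eigenvalue $0$. Together with the one-dimensional eigenspace found above, this accounts for all $n$ eigenvalues (counted with multiplicity), so the spectrum of $\mat{A}$ is $\{\norm[2]{\vec{x}}^2, 0, \dots, 0\}$. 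Because $\mat{A}$ is symmetric and its eigenvalues are all nonnegative (indeed $\vec{v}\T \mat{A} \vec{v} = (\vec{v}\T\vec{x})^2 \geq 0$ for all $\vec{v}$), the spectral norm equals the largest eigenvalue, which is $\norm[2]{\vec{x}}^2$.

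Alternatively, one can bypass the eigenspace-counting step entirely with the variational characterization $\norm[2]{\mat{A}} = \max_{\norm[2]{\vec{v}} = 1} \vec{v}\T \mat{A} \vec{v} = \max_{\norm[2]{\vec{v}} = 1} (\vec{v}\T \vec{x})^2$, which is at most $\norm[2]{\vec{x}}^2$ by Cauchy--Schwarz and is attained at $\vec{v} = \vec{x} / \norm[2]{\vec{x}}$. There is no real obstacle here: the only thing to be slightly careful about is handling the degenerate case $\vec{x} = \vec{0}$ and, if the variational route is taken, invoking the correct symmetric-matrix identity $\norm[2]{\mat{A}} = \max_{\norm[2]{\vec{v}}=1} \vec{v}\T\mat{A}\vec{v}$ (valid because $\mat{A} \succeq 0$). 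I would present the eigenvector computation as the main line of argument since it is the most transparent.
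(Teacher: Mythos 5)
Your proposal is correct, but it takes a different route from the paper. You argue spectrally: you exhibit $\vec{x}$ as an eigenvector with eigenvalue $\norm[2]{\vec{x}}^2$, note that the orthogonal complement of $\vec{x}$ is an $(n-1)$-dimensional eigenspace for $0$, and conclude via the fact that for a symmetric positive semidefinite matrix the spectral norm equals the largest eigenvalue. The paper instead works directly from the operator-norm definition: it writes $\norm[2]{\mat{A}} = \sup_{\norm[2]{\UnitVec}=1} \norm[2]{\vec{x}\vec{x}\T\UnitVec}$, factors this as $\norm[2]{\vec{x}} \sup_{\norm[2]{\UnitVec}=1} (\vec{x}\T\UnitVec)$, and uses the dual-norm identity $\sup_{\norm[2]{\UnitVec}=1} \vec{x}\T\UnitVec = \norm[2]{\vec{x}}$, with no appeal to eigenvalues or semidefiniteness at all. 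Your alternative variational argument via $\max_{\norm[2]{\vec{v}}=1} \vec{v}\T\mat{A}\vec{v}$ and Cauchy--Schwarz is closer in spirit to the paper's computation, though it still routes through the Rayleigh-quotient characterization, which requires symmetry/PSD-ness, whereas the paper's sup computation does not. What each buys: the paper's proof is shorter and more self-contained; yours is slightly longer but yields strictly more information (the entire spectrum of $\mat{A}$) and is explicit about the degenerate case $\vec{x}=\vec{0}$, which the paper glosses over (harmlessly, since both sides vanish there). Both are complete and correct for the purpose the lemma serves in bounding the Hessian norms.
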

\begin{proof}
By definition, the spectral norm of $\mat{A}$ is
\begin{align}
    \norm[2]{\mat{A}}
    &= \sup_{\UnitVec : \norm[2]{\UnitVec} = 1} \norm[2]{\mat{A} \UnitVec} \\
    &= \sup_{\UnitVec : \norm[2]{\UnitVec} = 1} \norm[2]{\vec{x} \vec{x}\T \UnitVec} \\
    &= \norm[2]{\vec{x}} \sup_{\UnitVec : \norm[2]{\UnitVec} = 1} (\vec{x}\T \UnitVec) \\
    &= \norm[2]{\vec{x}} \norm[2]{\vec{x}} .
\end{align}
The last equality follows from an alternate definition of the Euclidean norm.
\end{proof}

\begin{lemma}
\label{lem:softmax_smooth}
The Hessian of the softmax function, $\Policy(\Action \| \Context ; \Predictor, \InvTemp) = \frac{\exp(\InvTemp \Predictor(\Context, \Action))}{\sum_{\Action' \in \Actions} \exp(\InvTemp \Predictor(\Context, \Action'))}$, has bounded spectral norm:
\begin{equation}
    \norm[2]{\grad^2 \Policy(\Action \| \Context ; \Predictor, \InvTemp)}
    \leq 2 \InvTemp^2 \Policy(\Action \| \Context ; \Predictor, \InvTemp) \big( 1 - \Policy(\Action \| \Context ; \Predictor, \InvTemp) \big) .
\label{eq:softmax_hessian_norm_bound}
\end{equation}
Therefore, $\Policy(\Action \| \Context ; \Predictor, \InvTemp)$ is $\frac{\InvTemp^2}{2}$-smooth.
\end{lemma}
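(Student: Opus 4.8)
The plan is to fix the context $\Context$ and action $\Action$, regard $\Policy(\Action \| \Context; \Predictor, \InvTemp)$ as a scalar function of the score vector $\vec{z} \defeq \Predictor(\Context) \in \Reals^{\card\Actions}$, bound the spectral norm of its Hessian pointwise to obtain \cref{eq:softmax_hessian_norm_bound}, and then read off $\frac{\InvTemp^2}{2}$-smoothness. Writing $p_\Action \defeq \Policy(\Action \| \Context; \Predictor, \InvTemp)$, letting $\vec{p} \in \Reals^{\card\Actions}$ be the vector whose $\Action'$-th entry is $\Policy(\Action' \| \Context; \Predictor, \InvTemp)$, and letting $\ActionOneHot$ be the one-hot encoding of $\Action$, a direct coordinate computation (using $\partial p_\Action / \partial z_b = \InvTemp\, p_\Action(\1\{\Action = b\} - p_b)$ and differentiating once more, checking symmetry in the two free indices) gives
\[
    \grad^2 p_\Action = \InvTemp^2 p_\Action \Big[ (\ActionOneHot - \vec{p})(\ActionOneHot - \vec{p})\T - \big( \mathrm{diag}(\vec{p}) - \vec{p}\,\vec{p}\T \big) \Big].
\]

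Next I would bound the spectral norm of this Hessian. The two matrices $\mat{A} \defeq (\ActionOneHot - \vec{p})(\ActionOneHot - \vec{p})\T$ and $\mat{B} \defeq \mathrm{diag}(\vec{p}) - \vec{p}\,\vec{p}\T$ are both positive semidefinite ($\mat{A}$ is a rank-one outer product; $\mat{B}$ is the covariance matrix of a categorical distribution), so every eigenvalue of $\mat{A} - \mat{B}$ lies in $[-\norm[2]{\mat{B}}, \norm[2]{\mat{A}}]$ and hence $\norm[2]{\mat{A} - \mat{B}} \leq \max\{\norm[2]{\mat{A}}, \norm[2]{\mat{B}}\}$. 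For $\mat{A}$, \cref{lem:spectral_norm} gives $\norm[2]{\mat{A}} = \norm[2]{\ActionOneHot - \vec{p}}^2 = (1 - p_\Action)^2 + \sum_{\Action' \neq \Action} p_{\Action'}^2 \leq 2(1 - p_\Action)^2 \leq 2(1 - p_\Action)$, using $\sum_{\Action' \neq \Action} p_{\Action'}^2 \leq (\sum_{\Action' \neq \Action} p_{\Action'})^2 = (1 - p_\Action)^2$. For $\mat{B}$, positive semidefiniteness means the spectral norm is at most the trace, $\norm[2]{\mat{B}} \leq \sum_{\Action'} p_{\Action'}(1 - p_{\Action'}) = p_\Action(1 - p_\Action) + \sum_{\Action' \neq \Action} p_{\Action'}(1 - p_{\Action'}) \leq p_\Action(1 - p_\Action) + (1 - p_\Action) \leq 2(1 - p_\Action)$. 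Combining, $\norm[2]{\grad^2 p_\Action} \leq \InvTemp^2 p_\Action \cdot 2(1 - p_\Action)$, which is exactly \cref{eq:softmax_hessian_norm_bound}; then $p_\Action(1 - p_\Action) \leq \tfrac14$ yields the uniform bound $\norm[2]{\grad^2 p_\Action} \leq \InvTemp^2/2$, and integrating $\grad^2 p_\Action$ along the segment joining $\SmoothFuncInput$ and $\SmoothFuncInput'$ shows $\grad p_\Action$ is Lipschitz with the constant required by \cref{def:smoothness}.

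The main obstacle is getting the constant right. A plain triangle inequality $\norm[2]{\mat{A} - \mat{B}} \leq \norm[2]{\mat{A}} + \norm[2]{\mat{B}}$ loses a factor (giving $3 \InvTemp^2 p_\Action(1 - p_\Action)$ or worse), so one must exploit that $\mat{A} - \mat{B}$ is a \emph{difference} of positive semidefinite matrices to replace the sum of spectral norms by their maximum. A secondary subtlety is the trace estimate for $\mat{B}$: the $\Action' = \Action$ term must be kept separate (it contributes $p_\Action(1 - p_\Action)$) and the rest bounded by $\sum_{\Action' \neq \Action} p_{\Action'} = 1 - p_\Action$, so that the final bound is proportional to $p_\Action(1 - p_\Action)$ rather than merely a universal constant.
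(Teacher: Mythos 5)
Your proof is correct, and it reaches the paper's exact constant by a slightly different bounding argument on the same underlying matrix identity. The paper also arrives at the decomposition $\grad^2 \Policy(\Action) = \InvTemp^2 \Policy(\Action)\big[(\ActionOneHot - \vec{p})(\ActionOneHot - \vec{p})\T - (\mathrm{diag}(\vec{p}) - \vec{p}\vec{p}\T)\big]$, but via the log-derivative trick (writing the Hessian as $\Policy(\Action)(\mat{H}_1 + \mat{H}_2)$ with $\mat{H}_1$ the outer product of $\grad \ln \Policy(\Action)$ and $\mat{H}_2 = \grad^2 \ln \Policy(\Action)$, the negative scaled covariance), and it then bounds the two pieces by plain subadditivity: $\norm[2]{\mat{H}_1} = \InvTemp^2(1 - 2\Policy(\Action) + \vec{p}\T\vec{p})$ exactly (via \cref{lem:spectral_norm}) and $\norm[2]{\mat{H}_2} \leq \InvTemp^2(1 - \vec{p}\T\vec{p})$ via Jensen, so the $\vec{p}\T\vec{p}$ terms cancel and the triangle inequality already yields $2\InvTemp^2\Policy(\Action)(1-\Policy(\Action))$ with no loss. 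This makes your closing remark slightly inaccurate: the triangle inequality does not inherently cost a factor; it only does so if paired with your cruder individual bounds $\norm[2]{\mat{A}}, \norm[2]{\mat{B}} \leq 2(1-\Policy(\Action))$. Your workaround---observing that $\mat{A}$ and $\mat{B}$ are both positive semidefinite, so the spectrum of $\mat{A}-\mat{B}$ lies in $[-\norm[2]{\mat{B}}, \norm[2]{\mat{A}}]$ and the norm of the difference is at most the maximum of the two norms, together with the trace bound on the covariance $\mat{B}$---is a valid alternative and arguably more robust (it does not rely on the fortuitous cancellation of the $\vec{p}\T\vec{p}$ terms), while the paper's route buys a shorter calculation and reuses the same intermediate quantities ($\grad^2 \ln \Policy(\Action)$ and its norm bound) in the proof of \cref{lem:log_softmax_smooth}. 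All of your individual steps check out: the coordinate computation of the Hessian, the eigenvalue-interval argument via Rayleigh quotients, the bounds $\norm[2]{\ActionOneHot - \vec{p}}^2 \leq 2(1-\Policy(\Action))$ and $\norm[2]{\mat{B}} \leq \tr$-type estimate $\leq (1-\Policy(\Action))(1+\Policy(\Action)) \leq 2(1-\Policy(\Action))$, and the final passage from a uniform Hessian bound to smoothness.
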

\begin{proof}
For a twice-differentiable function (like the softmax), smoothness is equivalent to having a uniformly upper-bounded second derivative. Consequently, our first step will be to derive a formula for the Hessian, $\grad^2 \Policy(\Action \| \Context ; \Predictor, \InvTemp)$, with respect to $\Predictor(\Context)$ (which, for boosting, is an ensemble prediction, $\Ensemble[\Round](\Context)$). We will show that the Hessian decomposes into the sum of two symmetric, rank-$1$ matrices. We then upper-bound the norm of each matrix using \cref{lem:spectral_norm} and some additional reasoning.

To simplify notation, we will omit $\Context$, $\Predictor$ and $\InvTemp$, and simply write $\Policy \defeq \Policy(\Context ; \Predictor, \InvTemp) \in \Reals^{\card{\Actions}}$ to denote the vector of softmax probabilities, and $\Policy(\Action) \defeq \Policy(\Action \| \Context ; \Predictor, \InvTemp)$ to denote the conditional probability of $\Action$.

Using the log derivative trick and the product rule, we have that
\begin{align}
    \grad^2 \Policy(\Action)
    &= \grad\big( \grad \Policy(\Action) \big) \\
    &= \grad\big( \Policy(\Action) \grad \ln \Policy(\Action) \big) \\
    &= (\grad \Policy(\Action)) (\grad \ln \Policy(\Action))\T + \Policy(\Action) \grad^2 \ln \Policy(\Action) \\
    &= \Policy(\Action) \Big(
        \underbrace{ (\grad \ln \Policy(\Action)) (\grad \ln \Policy(\Action))\T }_{\mat{H}_1}
        + \underbrace{ \grad^2 \ln \Policy(\Action) }_{\mat{H}_2}
    \Big) .
\end{align}
By the subadditivity of the norm, 
\begin{equation}
    \norm[2]{\grad^2 \Policy(\Action)}
    = \norm[2]{\Policy(\Action) (\mat{H}_1 + \mat{H}_2)}
    \leq \Policy(\Action) \big( \norm[2]{\mat{H}_1} + \norm[2]{\mat{H}_2} \big) ,
\end{equation}
so we can upper-bound the norms of $\mat{H}_1$ and $\mat{H}_2$ separately.

Via \cref{lem:spectral_norm}, we have that
\begin{equation}
    \norm[2]{\mat{H}_1}
    = \norm[2]{(\grad \ln \Policy(\Action)) (\grad \ln \Policy(\Action))\T}
    = \norm[2]{\grad \ln \Policy(\Action)}^2 .
\end{equation}
The gradient of $\ln \Policy(\Action)$ is $\InvTemp (\ActionOneHot - \Policy)$, where $\ActionOneHot$ (without subscript) denotes the one-hot encoding of $\Action$. Therefore,
\begin{equation}
    \norm[2]{\grad \ln \Policy(\Action)}^2
    = \InvTemp^2 \norm[2]{\ActionOneHot - \Policy}^2
    = \InvTemp^2 \big( \ActionOneHot\T \ActionOneHot - 2 \ActionOneHot\T \Policy + \Policy\T \Policy \big)
    = \InvTemp^2 \big( 1 - 2 \Policy(\Action) + \Policy\T \Policy \big) .
\end{equation}

Turning now to $\mat{H}_2$, we note that $\grad^2 \ln \Policy(\Action_i)$ is the covariance of the distribution $\Policy$, scaled by $-\InvTemp^2$:
\begin{equation}
    \grad^2 \ln \Policy(\Action)
    = -\InvTemp^2 \Ep_{\Action' \by \Policy} \left[ (\ActionOneHot' - \Policy) (\ActionOneHot' - \Policy)\T \right] .
\label{eq:log_softmax_hessian}
\end{equation}
Thus,
\begin{align}
    \norm[2]{\mat{H}_2}
    &= \norm[2]{ \grad^2 \ln \Policy(\Action) } \\
    &= \InvTemp^2 \norm[2]{ \Ep_{\Action' \by \Policy} \left[ (\ActionOneHot' - \Policy) (\ActionOneHot' - \Policy)\T \right] } \\
    &\leq \InvTemp^2 \Ep_{\Action' \by \Policy} \left[ \norm[2]{ (\ActionOneHot' - \Policy) (\ActionOneHot' - \Policy)\T } \right] \\
    &= \InvTemp^2 \Ep_{\Action' \by \Policy} \left[ \norm[2]{ \ActionOneHot' - \Policy }^2 \right] .
\label{eq:norm_log_softmax_hessian_ub1}
\end{align}
The inequality is from Jensen's inequality; the final equality is from \cref{lem:spectral_norm}. Continuing,
\begin{equation}
    \Ep_{\Action' \by \Policy} \left[ \norm[2]{ \ActionOneHot' - \Policy }^2 \right]
    = \Ep_{\Action' \by \Policy} \left[ {\ActionOneHot'}\T \ActionOneHot' - 2 {\ActionOneHot'}\T \Policy + \Policy\T \Policy \right]
    = 1 - 2 \Ep_{\Action' \by \Policy}[\ActionOneHot']\T \Policy + \Policy\T \Policy
    = 1 - \Policy\T \Policy .
\label{eq:norm_log_softmax_hessian_ub2}
\end{equation}

Finally, combining the inequalities, we have
\begin{align}
    \norm[2]{\grad^2 \Policy(\Action)}
    &\leq \Policy(\Action) \big( \norm[2]{\mat{H}_1} + \norm[2]{\mat{H}_2} \big) \\
    &\leq \InvTemp^2 \Policy(\Action) \big( (1 - 2 \Policy(\Action) + \Policy\T \Policy) + (1 - \Policy\T \Policy) \big) \\
    &= 2 \InvTemp^2 \Policy(\Action) ( 1 - \Policy(\Action) ) ,
\end{align}
which proves \cref{eq:softmax_hessian_norm_bound}. To finish the proof, we note that
\begin{equation}
    2 \InvTemp^2 \Policy(\Action) ( 1 - \Policy(\Action) )
    \leq 2 \InvTemp^2 \times \frac{1}{4}
    = \frac{\InvTemp^2}{2} ,
\end{equation}
which follows from the fact that $\Policy(\Action) (1 - \Policy(\Action)) \leq \frac{1}{4}$ for $\Policy(\Action) \in [0, 1]$.
\end{proof}

\begin{remark}
To the best of our knowledge, \cref{lem:softmax_smooth} is the best (and possibly first documented) upper bound on the smoothness coefficient of the softmax. It complements work by \citet{gao:arxiv17}, who showed that the softmax is $\InvTemp$-Lipschitz. We are aware of only two other related smoothness results: \citet{agarwal:colt20} showed that the function $\Policy\T \vec{\Reward}$, for $\vec{\Reward} \in \Reals^{\card{\Actions}}$, is $5 \norm[\infty]{\vec{\Reward}}$-smooth; similarly, \citet{mei:icml20} showed that $\Policy\T \vec{\Reward}$, for $\vec{\Reward} \in [0, 1]^{\card{\Actions}}$, is $\frac{5}{2}$-smooth. \cref{eq:softmax_hessian_norm_bound} can be used to show that $\Policy\T \vec{\Reward}$, for $\vec{\Reward} \in \Reals^{\card{\Actions}}$, is actually $2 \norm[\infty]{\vec{\Reward}}$-smooth, thereby improving upon both prior results.
\end{remark}

\begin{lemma}
\label{lem:log_softmax_smooth}
The log-softmax function, $\ln \Policy(\Action \| \Context ; \Predictor, \InvTemp)$, is $\InvTemp^2 (1 - \card{\Actions}^{-1})$-smooth.
\end{lemma}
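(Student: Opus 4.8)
The plan is to reuse the Hessian computation already carried out inside the proof of \cref{lem:softmax_smooth}. As with the plain softmax, $\ln \Policy(\Action \| \Context ; \Predictor, \InvTemp)$ is twice differentiable in $\Predictor(\Context)$, so its smoothness is equivalent to a uniform upper bound on the spectral norm of its Hessian. But that Hessian is exactly the matrix $\mat{H}_2 = \grad^2 \ln \Policy(\Action)$ that appears in that proof: by \cref{eq:log_softmax_hessian} (suppressing $\Context$, $\Predictor$ and $\InvTemp$), $\mat{H}_2 = -\InvTemp^2 \Ep_{\Action' \by \Policy}[(\ActionOneHot' - \Policy)(\ActionOneHot' - \Policy)\T]$, i.e.\ $-\InvTemp^2$ times the covariance matrix of the one-hot encoding $\ActionOneHot'$ of a draw $\Action' \by \Policy$. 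Since $\ActionOneHot'$ is one-hot, ${\ActionOneHot'}{\ActionOneHot'}\T = \mathrm{diag}(\ActionOneHot')$, so this covariance equals $\mathrm{diag}(\Policy) - \Policy\Policy\T$; in particular it is positive semidefinite (a covariance matrix, equivalently the Hessian of the convex log-partition function) and does not depend on $\Action$.

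First I would bound the spectral norm of this PSD matrix by its trace: $\norm[2]{\mat{H}_2} = \InvTemp^2 \norm[2]{\mathrm{diag}(\Policy) - \Policy\Policy\T} \leq \InvTemp^2 \, \mathrm{tr}\!\big(\mathrm{diag}(\Policy) - \Policy\Policy\T\big) = \InvTemp^2 \big( \sum_{\Action} \Policy(\Action) - \sum_{\Action} \Policy(\Action)^2 \big) = \InvTemp^2 (1 - \norm[2]{\Policy}^2)$; the last two equalities are precisely \cref{eq:norm_log_softmax_hessian_ub1}--\cref{eq:norm_log_softmax_hessian_ub2} of the proof of \cref{lem:softmax_smooth}, so one can simply quote that bound rather than recompute the covariance. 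Then I would invoke Cauchy--Schwarz (equivalently the power-mean inequality), $\norm[2]{\Policy}^2 = \sum_{\Action} \Policy(\Action)^2 \geq \card{\Actions}^{-1}\big(\sum_{\Action}\Policy(\Action)\big)^2 = \card{\Actions}^{-1}$, and conclude $\norm[2]{\mat{H}_2} \leq \InvTemp^2(1 - \card{\Actions}^{-1})$ uniformly over $\Predictor(\Context)$, which is exactly the claimed smoothness coefficient.

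Honestly there is no real obstacle here: the statement is a two-line corollary of the Hessian identity \cref{eq:log_softmax_hessian} together with Cauchy--Schwarz. The only points needing (routine) care are (i) that $\ln\Policy$ is twice differentiable, so the spectral-norm characterization of smoothness applies, and (ii) that positive semidefiniteness is what licenses bounding the top eigenvalue by the trace. For perspective, the constant $1 - \card{\Actions}^{-1}$ is not the smallest possible---e.g.\ for uniform $\Policy$ the covariance $\mathrm{diag}(\Policy) - \Policy\Policy\T$ has top eigenvalue only $\card{\Actions}^{-1}$, so the trace bound is slack---but $1 - \card{\Actions}^{-1}$ is all that is needed for the smoothness analysis of the surrogate loss $\LogLoss_i$, and it admits the short proof above.
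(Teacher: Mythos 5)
Your proposal is correct and follows essentially the same route as the paper: reuse the Hessian identity \cref{eq:log_softmax_hessian} and the intermediate bound $\norm[2]{\grad^2 \ln \Policy(\Action)} \leq \InvTemp^2 (1 - \Policy\T\Policy)$ from the proof of \cref{lem:softmax_smooth}, then lower-bound $\norm[2]{\Policy}^2$ by $\card{\Actions}^{-1}$ on the simplex. Your alternative justification of the intermediate step (top eigenvalue $\leq$ trace for the PSD covariance, rather than the paper's Jensen-plus-rank-one argument) is a harmless variation that yields the identical bound, and your Cauchy--Schwarz step makes explicit what the paper calls ``straightforward to show.''
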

\begin{proof}
The proof builds on the proof of \cref{lem:softmax_smooth}. We will reuse our previous notation, where $\Policy(\Action) \defeq \Policy(\Action \| \Context ; \Predictor, \InvTemp)$ and $\Policy \defeq \Policy(\Context ; \Predictor, \InvTemp)$. Recall (from \cref{eq:log_softmax_hessian}) that the Hessian of the log-softmax is $\grad^2 \ln \Policy(\Action) = -\InvTemp^2 \Ep_{\Action' \by \Policy} \left[ (\ActionOneHot' - \Policy) (\ActionOneHot' - \Policy)\T \right]$, and (via \cref{eq:norm_log_softmax_hessian_ub1,eq:norm_log_softmax_hessian_ub2}) its norm has upper bound
\begin{equation}
    \norm[2]{\grad^2 \ln \Policy(\Action)} \leq \InvTemp^2 (1 - \Policy\T \Policy) .
\end{equation}
Since $\Policy$ is constrained to the simplex ($\norm[1]{\Policy} = 1$), it is straightforward to show that
\begin{equation}
    \inf_{\Policy : \norm[1]{\Policy} = 1} \norm[2]{\Policy}^2
    \geq \card{\Actions}^{-1} .
\end{equation}
Thus,
\begin{equation}
    \InvTemp^2 (1 - \Policy\T \Policy)
    \leq \InvTemp^2 (1 - \card{\Actions}^{-1}) ,
\end{equation}
which completes the proof.
\end{proof}

Having established that the softmax and log-softmax are smooth, we are now ready to prove our main smoothness results.

\begin{proposition}
\label{prop:loss_function_smooth}
The loss function, $\Loss_i(\Ensemble[\Round]) = -\frac{\Reward_i}{\Propensity_i} \Policy(\Action_i \| \Context_i ; \Ensemble[\Round])$, is $\frac{\ab{\Reward_i}}{2 \Propensity_i}$-smooth.
\end{proposition}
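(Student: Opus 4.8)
The plan is to observe that $\Loss_i$ is nothing more than the softmax function $\Policy(\Action_i \| \Context_i ; \Ensemble[\Round])$ rescaled by the fixed scalar $-\Reward_i / \Propensity_i$, and then to transfer the Hessian bound of \cref{lem:softmax_smooth} through that rescaling. The loss depends on the ensemble only through the finite-dimensional score vector $\Ensemble[\Round](\Context_i) \in \Reals^{\card{\Actions}}$, and with the convention $\InvTemp = 1$ adopted in \cref{sec:boosted_policy_learning} we have $\Loss_i(\Ensemble[\Round]) = -\frac{\Reward_i}{\Propensity_i}\, \Policy(\Action_i \| \Context_i ; \Ensemble[\Round])$. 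Since the softmax is twice differentiable, so is $\Loss_i$, and—consistently with how \cref{lem:softmax_smooth} itself certifies smoothness—it suffices to bound the spectral norm of its Hessian uniformly in $\Ensemble[\Round](\Context_i)$.

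First I would write, by linearity of differentiation, $\grad^2 \Loss_i(\Ensemble[\Round]) = -\frac{\Reward_i}{\Propensity_i}\, \grad^2 \Policy(\Action_i \| \Context_i ; \Ensemble[\Round])$, so that taking spectral norms and pulling out the constant gives $\norm[2]{\grad^2 \Loss_i(\Ensemble[\Round])} = \frac{\ab{\Reward_i}}{\Propensity_i}\, \norm[2]{\grad^2 \Policy(\Action_i \| \Context_i ; \Ensemble[\Round])}$. Then I would apply \cref{lem:softmax_smooth} with $\InvTemp = 1$, which gives $\norm[2]{\grad^2 \Policy(\Action_i \| \Context_i ; \Ensemble[\Round])} \leq 2\, \Policy(\Action_i \| \Context_i ; \Ensemble[\Round])\big(1 - \Policy(\Action_i \| \Context_i ; \Ensemble[\Round])\big) \leq \frac{1}{2}$, the final inequality being $p(1-p) \leq \frac{1}{4}$ for $p \in [0,1]$. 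Combining yields $\norm[2]{\grad^2 \Loss_i(\Ensemble[\Round])} \leq \frac{\ab{\Reward_i}}{2 \Propensity_i}$ everywhere, which is exactly the claimed smoothness coefficient (the case $\Reward_i = 0$ is trivial, since then $\Loss_i \equiv 0$).

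There is essentially no hard step: the proposition is a one-line corollary of \cref{lem:softmax_smooth}. The only points that warrant care are bookkeeping—namely, that the relevant differentiation variable is the score vector $\Ensemble[\Round](\Context_i)$ rather than the (infinite-dimensional) ensemble function, and that multiplying a function by a scalar $c$ multiplies the Hessian-norm smoothness coefficient by $\ab{c}$ (a naive reading of \cref{def:smoothness} would suggest otherwise, so the argument should be carried out at the level of the Hessian, in line with the preceding lemmas). Once these are stated, the chain of inequalities above completes the proof.
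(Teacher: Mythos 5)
Your proposal is correct and matches the paper's own proof: the paper likewise notes that the softmax is $\frac{1}{2}$-smooth (for $\InvTemp = 1$) via \cref{lem:softmax_smooth} and pulls the constant $\frac{\ab{\Reward_i}}{\Propensity_i}$ out of the Hessian norm to conclude $\norm[2]{\grad^2 \Loss_i(\Ensemble[\Round])} \leq \frac{\ab{\Reward_i}}{2\Propensity_i}$. Your added bookkeeping remarks (differentiating with respect to the score vector $\Ensemble[\Round](\Context_i)$, and handling the scalar at the Hessian level) are consistent with how the paper's lemmas certify smoothness.
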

\begin{proof}
Since the softmax is $\frac{1}{2}$-smooth (for $\InvTemp = 1$), we have that
\begin{equation}
    \norm[2]{\grad^2 \Loss_i(\Ensemble[\Round])}
    = \frac{\ab{\Reward_i}}{\Propensity_i} \norm[2]{\grad^2 \Policy(\Action_i \| \Context_i ; \Ensemble[\Round])}
    \leq \frac{\ab{\Reward_i}}{2 \Propensity_i} .
\end{equation}
Thus completes the proof.
\end{proof}

\begin{proposition}
\label{prop:log_loss_function_smooth}
The surrogate loss function, $\LogLoss_i(\Ensemble[\Round]) = -\frac{\Reward_i}{\Propensity_i} ( \ln \Policy(\Action_i \| \Context_i ; \Ensemble[\Round]) + 1 )$, is $\frac{\ab{\Reward_i}}{\Propensity_i}$-smooth.
\end{proposition}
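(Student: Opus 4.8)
The plan is to obtain this as an immediate corollary of Lemma~\ref{lem:log_softmax_smooth}, in exact parallel with the proof of Proposition~\ref{prop:loss_function_smooth}. As in the earlier lemmas, I would treat $\LogLoss_i$ as a function of the ensemble prediction vector $\Ensemble[\Round](\Context_i) \in \Reals^{\card{\Actions}}$ (the only object the loss depends on, through the softmax), and I would invoke the fact used in Lemma~\ref{lem:softmax_smooth} that a twice-differentiable function whose Hessian has spectral norm uniformly bounded by $M$ is $M$-smooth. So the whole task reduces to bounding the spectral norm of $\grad^2 \LogLoss_i$.

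First I would observe that $\LogLoss_i(\Ensemble[\Round]) = -\frac{\Reward_i}{\Propensity_i}\ln \Policy(\Action_i \| \Context_i ; \Ensemble[\Round]) - \frac{\Reward_i}{\Propensity_i}$ is an affine transformation of the log-softmax: it scales $\ln \Policy(\Action_i \| \Context_i ; \Ensemble[\Round])$ by the scalar $-\Reward_i/\Propensity_i$ and adds a constant. The additive constant contributes nothing to the gradient, and scaling by a scalar $c$ scales the Hessian by $c$, so $\grad^2 \LogLoss_i(\Ensemble[\Round]) = -\frac{\Reward_i}{\Propensity_i}\, \grad^2 \ln \Policy(\Action_i \| \Context_i ; \Ensemble[\Round])$. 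Taking spectral norms and applying Lemma~\ref{lem:log_softmax_smooth} with $\InvTemp = 1$ (which gives $\norm[2]{\grad^2 \ln \Policy(\Action_i \| \Context_i ; \Ensemble[\Round])} \leq 1 - \card{\Actions}^{-1}$), I would write
\begin{equation}
    \norm[2]{\grad^2 \LogLoss_i(\Ensemble[\Round])}
    = \frac{\ab{\Reward_i}}{\Propensity_i}\, \norm[2]{\grad^2 \ln \Policy(\Action_i \| \Context_i ; \Ensemble[\Round])}
    \leq \frac{\ab{\Reward_i}}{\Propensity_i}\left(1 - \card{\Actions}^{-1}\right)
    \leq \frac{\ab{\Reward_i}}{\Propensity_i} ,
\end{equation}
where the first equality uses $\Propensity_i > 0$ (full support) and the last step discards the factor $1 - \card{\Actions}^{-1} \leq 1$ so as to match the clean statement. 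Since this bounds the Hessian spectral norm uniformly over all $\Ensemble[\Round]$, it follows that $\LogLoss_i$ is $\frac{\ab{\Reward_i}}{\Propensity_i}$-smooth.

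There is no real obstacle here—the substantive work was already done in Lemma~\ref{lem:log_softmax_smooth}. The only points meriting a sentence of care are: (i) verifying that affine post-composition (scalar multiplication plus a constant) turns a $\beta$-smooth function into a $\ab{\Reward_i}/\Propensity_i \cdot \beta$-smooth one, which is exactly the Hessian identity above; and (ii) noting that the coefficient in the statement is the deliberately rounded value $\ab{\Reward_i}/\Propensity_i$ rather than the slightly tighter $\ab{\Reward_i}(1 - \card{\Actions}^{-1})/\Propensity_i$, consistent with the constant $\Smoothness_i = 1$ used for the nonnegative-reward examples in \cref{alg:surrogate_boosted_policy_learning}.
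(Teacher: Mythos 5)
Your proof is correct and follows essentially the same route as the paper: scale the Hessian of the log-softmax by $\ab{\Reward_i}/\Propensity_i$, invoke \cref{lem:log_softmax_smooth} at $\InvTemp = 1$, and relax $1 - \card{\Actions}^{-1}$ to $1$. Your write-up merely makes the affine-post-composition step explicit, which the paper leaves implicit.
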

\begin{proof}
To prove \cref{prop:log_loss_function_smooth}, we will simplify \cref{lem:log_softmax_smooth} by noting that $\InvTemp^2 (1 - \card{\Actions}^{-1}) \leq \InvTemp^2$. Then, for $\InvTemp = 1$, the log-softmax is $1$-smooth. Therefore,
\begin{equation}
    \norm[2]{\grad^2 \LogLoss_i(\Ensemble[\Round])}
    = \frac{\ab{\Reward_i}}{\Propensity_i} \norm[2]{\grad^2 \ln \Policy(\Action_i \| \Context_i ; \Ensemble[\Round])}
    \leq \frac{\ab{\Reward_i}}{\Propensity_i} ,
\end{equation}
which completes the proof.
\end{proof}

\section{DERIVATIONS OF BOOSTING ALGORITHMS}
\label{sec:boosting_algo_dervations}

This appendix contains the full derivations of our boosting algorithms, which were deferred from the main paper.

\subsection{Derivation of \cref{alg:boosted_policy_learning} (BOPL)}
\label{sec:derivation_boosted_policy_learning}

In this section, we give an unabridged derivation of \cref{alg:boosted_policy_learning}. It all starts by applying our smoothness result (\cref{prop:loss_function_smooth}) to construct a recursive upper bound on $\Loss_i$ that isolates the influence of $\EnsembleWeight_{\Round}$ and $\Predictor_{\Round}$. To do so, we use the following technical lemma.

\begin{lemma}
\label{lem:smooth_upper_bound}
If $\SmoothFunc : \SmoothFuncDomain \to \Reals$ is $\Smoothness$-smooth, then for all $\SmoothFuncInput, \SmoothFuncInput' \in \SmoothFuncDomain$,
\begin{equation}
    \SmoothFunc(\SmoothFuncInput) \leq \SmoothFunc(\SmoothFuncInput') + \grad \SmoothFunc(\SmoothFuncInput')\T (\SmoothFuncInput - \SmoothFuncInput') + \frac{\Smoothness}{2} \norm[2]{\SmoothFuncInput - \SmoothFuncInput'}^2 .
\end{equation}
\end{lemma}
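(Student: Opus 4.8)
The plan is to establish the classical quadratic ``descent lemma'': a Lipschitz gradient forces $\SmoothFunc$ to lie below its first-order Taylor expansion plus a quadratic correction. I would argue along the line segment joining $\SmoothFuncInput'$ to $\SmoothFuncInput$, which lies in $\SmoothFuncDomain$ in all our applications (where $\SmoothFuncDomain$ is a Euclidean space; more generally one assumes $\SmoothFuncDomain$ convex).

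First I would define the scalar function $g(t) \defeq \SmoothFunc\big(\SmoothFuncInput' + t(\SmoothFuncInput - \SmoothFuncInput')\big)$ for $t \in [0,1]$. The chain rule gives $g'(t) = \grad\SmoothFunc\big(\SmoothFuncInput' + t(\SmoothFuncInput - \SmoothFuncInput')\big)\T(\SmoothFuncInput - \SmoothFuncInput')$, which is continuous in $t$ since $\grad\SmoothFunc$ is Lipschitz (hence continuous), so the fundamental theorem of calculus applies: $\SmoothFunc(\SmoothFuncInput) - \SmoothFunc(\SmoothFuncInput') = g(1) - g(0) = \int_0^1 g'(t)\,dt$. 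Subtracting the linear term $\grad\SmoothFunc(\SmoothFuncInput')\T(\SmoothFuncInput - \SmoothFuncInput') = \int_0^1 \grad\SmoothFunc(\SmoothFuncInput')\T(\SmoothFuncInput-\SmoothFuncInput')\,dt$ from both sides isolates the curvature term:
\[
    \SmoothFunc(\SmoothFuncInput) - \SmoothFunc(\SmoothFuncInput') - \grad\SmoothFunc(\SmoothFuncInput')\T(\SmoothFuncInput-\SmoothFuncInput')
    = \int_0^1 \big( \grad\SmoothFunc(\SmoothFuncInput' + t(\SmoothFuncInput-\SmoothFuncInput')) - \grad\SmoothFunc(\SmoothFuncInput') \big)\T (\SmoothFuncInput - \SmoothFuncInput')\,dt .
\]
I would then bound the integrand by Cauchy--Schwarz by $\norm[2]{\grad\SmoothFunc(\SmoothFuncInput' + t(\SmoothFuncInput-\SmoothFuncInput')) - \grad\SmoothFunc(\SmoothFuncInput')}\,\norm[2]{\SmoothFuncInput - \SmoothFuncInput'}$, invoke $\Smoothness$-smoothness (which controls how fast $\grad\SmoothFunc$ can change) to replace the first factor by $\Smoothness\, t\,\norm[2]{\SmoothFuncInput - \SmoothFuncInput'}$ --- the two arguments differ exactly by $t(\SmoothFuncInput - \SmoothFuncInput')$, whose norm is $t\,\norm[2]{\SmoothFuncInput - \SmoothFuncInput'}$ --- and finish with $\int_0^1 \Smoothness t\,dt = \Smoothness/2$. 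Rearranging yields the stated inequality. For the twice-differentiable losses we actually use, an equivalent route is Taylor's theorem with Lagrange remainder, bounding the remainder quadratic form by the Hessian's spectral norm, which is exactly the quantity controlled by \cref{lem:softmax_smooth,lem:log_softmax_smooth}.

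I do not anticipate a real obstacle: this is a standard argument. The only points needing a moment's care are confirming that the segment $[\SmoothFuncInput', \SmoothFuncInput]$ stays in $\SmoothFuncDomain$ and that $g'$ is integrable (both immediate from convexity of the domain and continuity of $\grad\SmoothFunc$), and making sure the smoothness constant is applied to the displacement $t(\SmoothFuncInput - \SmoothFuncInput')$ inside the integral rather than to the full displacement $\SmoothFuncInput - \SmoothFuncInput'$ --- it is precisely this that produces the factor $1/2$.
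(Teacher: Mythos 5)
Your proof is correct: it is the classical descent-lemma argument (parameterize the segment, apply the fundamental theorem of calculus, Cauchy--Schwarz, and the Lipschitz property of $\grad\SmoothFunc$, integrating $\Smoothness t$ to get the factor $\tfrac{1}{2}$), and the paper itself states \cref{lem:smooth_upper_bound} without proof, treating it as a standard fact, so there is no competing argument to compare against. One point deserves explicit care: as literally written, \cref{def:smoothness} bounds $\norm[2]{\grad \SmoothFunc(\SmoothFuncInput) - \grad \SmoothFunc(\SmoothFuncInput')}^2$ by $\frac{\Smoothness}{2} \norm[2]{\SmoothFuncInput - \SmoothFuncInput'}^2$, which only makes $\grad\SmoothFunc$ Lipschitz with constant $\sqrt{\Smoothness/2}$ and would yield a quadratic coefficient $\tfrac{1}{2}\sqrt{\Smoothness/2}$ rather than $\tfrac{\Smoothness}{2}$. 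Your step ``replace the first factor by $\Smoothness\, t\, \norm[2]{\SmoothFuncInput - \SmoothFuncInput'}$'' implicitly uses the standard reading that $\grad\SmoothFunc$ is $\Smoothness$-Lipschitz; this is in fact the reading the paper relies on everywhere else (e.g., \cref{lem:softmax_smooth,prop:loss_function_smooth} deduce $\Smoothness$-smoothness from a Hessian spectral-norm bound of $\Smoothness$), so your proof establishes the lemma as intended --- just state explicitly which form of the smoothness condition you invoke, since the displayed definition and the lemma's constant are otherwise inconsistent. Your remaining caveats (convexity of the segment, integrability of $g'$) are handled correctly, and the alternative route via Taylor's theorem with a Hessian bound is equally valid for the twice-differentiable losses used in the paper.
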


To apply \cref{lem:smooth_upper_bound}, first note that the gradient of $\Loss_i$ with respect to $\Ensemble[\Round](\Context_i)$ is 
\begin{equation}
    \grad \Loss_i(\Ensemble[\Round])
    = -\frac{\Reward_i}{\Propensity_i} \Policy(\Action_i \| \Context_i ; \Ensemble[\Round]) (\ActionOneHot_i - \Policy(\Context_i ; \Ensemble[\Round]))
    = -\frac{\Reward_i}{\Propensity_i} \EnsemblePolicy[\Round](\Action_i \| \Context_i) (\ActionOneHot_i - \EnsemblePolicy[\Round](\Context_i)) ,
\label{eq:loss_gradient_wrt_ensemble}
\end{equation}
where $\ActionOneHot_i$ denotes the one-hot encoding of $\Action_i$. Combining this with \cref{prop:loss_function_smooth,lem:smooth_upper_bound}, we have that
\begin{equation}
    \Loss_i(\Ensemble[\Round])
    \leq \Loss_i(\Ensemble[\Round-1])
        - \frac{\Reward_i}{\Propensity_i} \EnsemblePolicy[\Round-1](\Action_i \| \Context_i) (\ActionOneHot_i - \EnsemblePolicy[\Round-1](\Context_i))\T (\EnsembleWeight_{\Round} \Predictor_{\Round}(\Context_i))
        + \frac{\ab{\Reward_i}}{4 \Propensity_i} \norm{\EnsembleWeight_{\Round} \Predictor_{\Round}(\Context_i)}^2 .
\label{eq:loss_smooth_upper_bound}
\end{equation}
Averaging \cref{eq:loss_smooth_upper_bound} over $i = 1, \dots, \DataSize$, we obtain a recursive upper bound:
\begin{equation}
    \EmpRisk(\EnsemblePolicy[\Round], \Data)
    \leq \EmpRisk(\EnsemblePolicy[\Round-1], \Data)
        - \frac{1}{\DataSize} \sum_{i=1}^{\DataSize} \Bigg[
            \frac{\Reward_i}{\Propensity_i} \EnsemblePolicy[\Round-1](\Action_i \| \Context_i) (\ActionOneHot_i - \EnsemblePolicy[\Round-1](\Context_i))\T (\EnsembleWeight_{\Round} \Predictor_{\Round}(\Context_i))
            - \frac{\ab{\Reward_i}}{4 \Propensity_i} \norm{\EnsembleWeight_{\Round} \Predictor_{\Round}(\Context_i)}^2
        \Bigg] ,
\label{eq:emp_risk_quad_upper_bound}
\end{equation}

Observe that the bound is quadratic in the ensemble weight, $\EnsembleWeight_{\Round}$. Thus, we can obtain a closed-form expression for the ensemble weight that minimizes the upper bound, for any given predictor:
\begin{equation}
    \EnsembleWeight_{\Round}^\star \defeq
    \frac{
        \frac{2}{\DataSize} \sum_{i=1}^{\DataSize} \frac{\Reward_i}{\Propensity_i} \EnsemblePolicy[\Round-1](\Action_i \| \Context_i) (\ActionOneHot_i - \EnsemblePolicy[\Round-1](\Context_i))\T \Predictor_{\Round}(\Context_i)
    }{
        \frac{1}{\DataSize} \sum_{i=1}^{\DataSize} \frac{\ab{\Reward_i}}{\Propensity_i} \norm{\Predictor_{\Round}(\Context_i)}^2
    } .
\label{eq:optimal_ensemble_weight}
\end{equation}
Plugging $\EnsembleWeight_{\Round}^\star$ into \cref{eq:emp_risk_quad_upper_bound} yields another recursive upper bound:
\begin{equation}
    \EmpRisk(\EnsemblePolicy[\Round], \Data)
    \leq \EmpRisk(\EnsemblePolicy[\Round-1], \Data)
        -\frac{
            \left( \frac{1}{\DataSize} \sum_{i=1}^{\DataSize} \frac{\Reward_i}{\Propensity_i} \EnsemblePolicy[\Round-1](\Action_i \| \Context_i) (\ActionOneHot_i - \EnsemblePolicy[\Round-1](\Context_i))\T \Predictor_{\Round}(\Context_i) \right)^2
        }{
             \frac{1}{\DataSize} \sum_{i=1}^{\DataSize} \frac{\ab{\Reward_i}}{\Propensity_i} \norm{\Predictor_{\Round}(\Context_i)}^2
        } .
\label{eq:emp_risk_quad_upper_bound_decrease}
\end{equation}
From this, it is clear that an optimal predictor at round $\Round$ is
\begin{equation}
    \Predictor_{\Round}^\star
    \in \argmax_{\Predictor \in \Predictors} \,
    \frac{
        \left( \frac{1}{\DataSize} \sum_{i=1}^{\DataSize} \frac{\Reward_i}{\Propensity_i} \EnsemblePolicy[\Round-1](\Action_i \| \Context_i) (\ActionOneHot_i - \EnsemblePolicy[\Round-1](\Context_i))\T \Predictor(\Context_i) \right)^2
    }{
         \frac{1}{\DataSize} \sum_{i=1}^{\DataSize} \frac{\ab{\Reward_i}}{\Propensity_i} \norm{\Predictor(\Context_i)}^2
    } .
\label{eq:optimal_base_predictor}
\end{equation}
Since $\Predictor_{\Round}^\star$ is invariant to scaling, we can fix its scale by constraining $\frac{1}{\DataSize} \sum_{i=1}^{\DataSize} \frac{\ab{\Reward_i}}{\Propensity_i} \norm{\Predictor(\Context_i)}^2 = \NormConst$, for $\NormConst > 0$. We can then simply maximize the magnitude of the numerator in \cref{eq:optimal_base_predictor}, subject to this constraint. This results in the base learning objective:
\begin{equation}
    \max_{\Predictor \in \Predictors} \, \ab{
        \frac{1}{\DataSize} \sum_{i=1}^{\DataSize} \frac{\Reward_i}{\Propensity_i} \EnsemblePolicy[\Round-1](\Action_i \| \Context_i) (\ActionOneHot_i - \EnsemblePolicy[\Round-1](\Context_i))\T \Predictor(\Context_i)
    }
    \quad \text{s.t.} \quad \frac{1}{\DataSize} \sum_{i=1}^{\DataSize} \frac{\ab{\Reward_i}}{\Propensity_i} \norm{\Predictor(\Context_i)}^2 =  \NormConst .
\label{eq:base_learning_obj}
\end{equation}
This optimization becomes line 3 of \cref{alg:boosted_policy_learning}. \cref{eq:optimal_ensemble_weight} becomes line 4.

\subsection{Derivation of \cref{alg:surrogate_boosted_policy_learning} (BOPL-S)}
\label{sec:derivation_surrogate_boosted_policy_learning}

Recall from \cref{sec:surrogate_objective} that we use a composite of the original loss function, $\Loss_i$, and the surrogate loss function, $\LogLoss_i$, to upper bound the empirical risk:
\begin{equation}
    \EmpRisk(\EnsemblePolicy[\Round], \Data)
    \leq \SurrogateEmpRisk(\EnsemblePolicy[\Round], \Data)
    = \frac{1}{\DataSize} \sum_{i=1}^{\DataSize} \1\{ \Reward_i < 0 \} \Loss_i(\Ensemble[\Round]) + \1\{ \Reward_i \geq 0 \} \LogLoss_i(\Ensemble[\Round]) .
\end{equation}
Since both $\Loss_i$ and $\LogLoss_i$ are smooth (see \cref{sec:smoothness}), we can construct a recursive upper bound on the righthand side that isolates $\EnsembleWeight_{\Round}$ and $\Predictor_{\Round}$. Recall the upper bound for $\Loss_i$ given in \cref{eq:loss_smooth_upper_bound}. Further, using \cref{prop:log_loss_function_smooth,lem:smooth_upper_bound}, and noting the gradient,
\begin{equation}
    \grad \LogLoss_i(\Ensemble[\Round])
    = -\frac{\Reward_i}{\Propensity_i} (\ActionOneHot_i - \EnsemblePolicy[\Round](\Context_i)) ,
\label{eq:log_loss_gradient_wrt_ensemble}
\end{equation}
we have that
\begin{equation}
    \LogLoss_i(\Ensemble[\Round])
    \leq \LogLoss_i(\Ensemble[\Round-1])
        - \frac{\Reward_i}{\Propensity_i} (\ActionOneHot_i - \EnsemblePolicy[\Round-1](\Context_i))\T (\EnsembleWeight_{\Round} \Predictor_{\Round}(\Context_i))
        + \frac{\ab{\Reward_i}}{2 \Propensity_i} \norm{\EnsembleWeight_{\Round} \Predictor_{\Round}(\Context_i)}^2 .
\label{eq:log_loss_smooth_upper_bound}
\end{equation}
Thus, combining \cref{eq:loss_smooth_upper_bound,eq:log_loss_smooth_upper_bound}, we obtain a recursive upper bound,
\begin{align}
    \SurrogateEmpRisk(\EnsemblePolicy[\Round], \Data)
    &\leq \SurrogateEmpRisk(\EnsemblePolicy[\Round-1], \Data)
        - \frac{1}{\DataSize} \sum_{i=1}^{\DataSize} \Bigg[
            \frac{\Reward_i \GradSwitch_{i,t}}{\Propensity_i} (\ActionOneHot_i - \EnsemblePolicy[\Round-1](\Context_i))\T (\EnsembleWeight_{\Round} \Predictor_{\Round}(\Context_i))
            - \frac{\ab{\Reward_i} \Smoothness_i }{2 \Propensity_i} \norm{\EnsembleWeight_{\Round} \Predictor_{\Round}(\Context_i)}^2
        \Bigg], \\
    \text{where}\quad
    \GradSwitch_{i,t} &\defeq
        \begin{cases}
        \EnsemblePolicy[\Round-1](\Action_i \| \Context_i) & \text{if } \Reward_i < 0 , \\
        1 & \text{if } \Reward_i \geq 0 ;
        \end{cases}
    \eqand
    \Smoothness_i \defeq
        \begin{cases}
        \frac{1}{2} & \text{if } \Reward_i < 0 , \\
        1 & \text{if } \Reward_i \geq 0 .
        \end{cases}
\end{align}

The rest of the derivation proceeds similarly to \cref{sec:derivation_boosted_policy_learning}. Solving for the optimal ensemble weight, we get
\begin{equation}
    \EnsembleWeight_{\Round}^\star \defeq
    \frac{
        \frac{1}{\DataSize} \sum_{i=1}^{\DataSize} \frac{\Reward_i \GradSwitch_{i,t}}{\Propensity_i} (\ActionOneHot_i - \EnsemblePolicy[\Round-1](\Context_i))\T \Predictor_{\Round}(\Context_i)
    }{
        \frac{1}{\DataSize} \sum_{i=1}^{\DataSize} \frac{\ab{\Reward_i} \Smoothness_i}{\Propensity_i} \norm{\Predictor_{\Round}(\Context_i)}^2
    } .
\label{eq:surrogate_optimal_ensemble_weight}
\end{equation}
Then, using this value in the upper bound, we get
\begin{equation}
    \SurrogateEmpRisk(\EnsemblePolicy[\Round], \Data)
    \leq \SurrogateEmpRisk(\EnsemblePolicy[\Round-1], \Data)
        -\frac{
            \left( \frac{1}{\DataSize} \sum_{i=1}^{\DataSize} \frac{\Reward_i \GradSwitch_{i,t}}{\Propensity_i} (\ActionOneHot_i - \EnsemblePolicy[\Round-1](\Context_i))\T \Predictor_{\Round}(\Context_i) \right)^2
        }{
             \frac{2}{\DataSize} \sum_{i=1}^{\DataSize} \frac{\ab{\Reward_i} \Smoothness_i}{\Propensity_i} \norm{\Predictor_{\Round}(\Context_i)}^2
        } .
\label{eq:surrogate_emp_risk_quad_upper_bound_decrease}
\end{equation}
Therefore, an optimal predictor at round $\Round$ is given by
\begin{equation}
    \Predictor_{\Round}^\star
    \in \argmax_{\Predictor \in \Predictors} \,
    \frac{
        \left( \frac{1}{\DataSize} \sum_{i=1}^{\DataSize} \frac{\Reward_i \GradSwitch_{i,t}}{\Propensity_i} (\ActionOneHot_i - \EnsemblePolicy[\Round-1](\Context_i))\T \Predictor(\Context_i) \right)^2
    }{
         \frac{2}{\DataSize} \sum_{i=1}^{\DataSize} \frac{\ab{\Reward_i} \Smoothness_i}{\Propensity_i} \norm{\Predictor(\Context_i)}^2
    } .
\label{eq:surrogate_optimal_base_predictor}
\end{equation}
Ignoring the $1/2$ scaling (which does not affect the argmax), and recognizing that $\Predictor_{\Round}^\star$ is scale-invariant, we obtain the following constrained optimization problem for the base learner:
\begin{equation}
    \max_{\Predictor \in \Predictors} \, \ab{
        \frac{1}{\DataSize} \sum_{i=1}^{\DataSize} \frac{\Reward_i \GradSwitch_{i,t}}{\Propensity_i} (\ActionOneHot_i - \EnsemblePolicy[\Round-1](\Context_i))\T \Predictor(\Context_i)
    }
    \quad \text{s.t.} \quad \frac{1}{\DataSize} \sum_{i=1}^{\DataSize} \frac{\ab{\Reward_i} \Smoothness_i}{\Propensity_i} \norm{\Predictor(\Context_i)}^2 =  \NormConst .
\end{equation}
This becomes line 3 of \cref{alg:surrogate_boosted_policy_learning}, and \cref{eq:surrogate_optimal_ensemble_weight} becomes line 4.

\section{EXCESS RISK ANALYSIS}
\label{sec:excess_risk_analysis}

This appendix provides the proof of our excess empirical risk bound (\cref{th:emp_risk_bound}), and then shows how it can be combined with concentration and uniform convergence to bound the excess population risk.

\subsection{Proof of \cref{th:emp_risk_bound}}
\label{sec:proof_emp_risk_bound}

When we substitute the algorithm's constraint that $\frac{1}{\DataSize} \sum_{i=1}^{\DataSize} \frac{\ab{\Reward_i}}{\Propensity_i} \norm{\Predictor(\Context_i)}^2 = \NormConst$ into \cref{eq:emp_risk_quad_upper_bound_decrease}, we obtain
\begin{align}
    \EmpRisk(\EnsemblePolicy[\Round], \Data)
    &\leq \EmpRisk(\EnsemblePolicy[\Round-1], \Data) - \frac{1}{\NormConst} \left( \frac{1}{\DataSize} \sum_{i=1}^{\DataSize} \frac{\Reward_i}{\Propensity_i} \EnsemblePolicy[\Round-1](\Action_i \| \Context_i) (\ActionOneHot_i - \EnsemblePolicy[\Round-1](\Context_i))\T \Predictor_{\Round}(\Context_i) \right)^2 \\
    &= \EmpRisk(\EnsemblePolicy[\Round-1], \Data) - \frac{\NormConst}{4} \EnsembleWeight_{\Round}^2 ,
\end{align}
in which we reduce the righthand expression using \cref{eq:optimal_ensemble_weight}. This bound is recursive, depending on the empirical risk of the previous ensemble policy, $\EnsemblePolicy[\Round-1]$. The base case is $\EmpRisk(\EnsemblePolicy[0], \Data)$. Unraveling the recursion from round $\Rounds$, we get
\begin{align}
    \EmpRisk(\EnsemblePolicy, \Data)
    \leq \EmpRisk(\EnsemblePolicy[\Rounds-1], \Data) - \frac{\NormConst}{4} \EnsembleWeight_{\Rounds}^2
    \leq \EmpRisk(\EnsemblePolicy[0], \Data) - \frac{\NormConst}{4} \sum_{\Round=1}^{\Rounds} \EnsembleWeight_{\Round}^2 .
\label{eq:quad_emp_risk_bound}
\end{align}
Subtracting $\OptEmpRisk$ from both sides of the inequality, we get
\begin{equation}
    \EmpRisk(\EnsemblePolicy, \Data) - \OptEmpRisk
    \leq \EmpRisk(\EnsemblePolicy[0], \Data) - \OptEmpRisk - \frac{\NormConst}{4} \sum_{\Round=1}^{\Rounds} \EnsembleWeight_{\Round}^2
    = \ExcessEmpRisk_0 - \frac{\NormConst}{4} \sum_{\Round=1}^{\Rounds} \EnsembleWeight_{\Round}^2 .
\label{eq:quad_excess_emp_risk_bound}
\end{equation}
Note that $\ExcessEmpRisk_0$ is nonnegative, by definition of $\OptEmpRisk$. Therefore, using the identity $c (1 - z) \leq c e^{-z}$, for all $c \in \Reals_+$ and $z \in \Reals$, we have that
\begin{equation}
    (\ref{eq:quad_excess_emp_risk_bound})
    = \ExcessEmpRisk_0 \left(
        1 - \frac{\NormConst}{4 \ExcessEmpRisk_0} \sum_{\Round=1}^{\Rounds} \EnsembleWeight_{\Round}^2
    \right)
    \leq \ExcessEmpRisk_0 \exp\left(
        - \frac{\NormConst}{4 \ExcessEmpRisk_0} \sum_{\Round=1}^{\Rounds} \EnsembleWeight_{\Round}^2
    \right) ,
\label{eq:exp_emp_risk_bound}
\end{equation}
which completes the proof.

\subsection{Excess Population Risk Bound}
\label{sec:excess_risk_bound}

We now explain how to relate \cref{th:emp_risk_bound} to an upper bound on BOPL's excess risk relative to an optimal policy. Let $\OptPolicy \in \argmin_{\Policy} \Risk(\Policy)$ denote an optimal policy (i.e., risk minimizer), and let $\OptRisk \defeq \Risk(\OptPolicy)$ denote its corresponding risk. Recall that $\EmpOptPolicy \in \argmin_{\Policy} \EmpRisk(\Policy, \Data)$ is an \emph{empirically} optimal policy (i.e., empirical risk minimizer) for a given dataset, $\Data$, and $\OptEmpRisk \defeq \EmpRisk(\EmpOptPolicy)$ is its corresponding empirical risk. Using these definitions, the excess risk can be expressed as $\Risk(\EnsemblePolicy) - \OptRisk$, and the excess empirical risk is $\EmpRisk(\EnsemblePolicy, \Data) - \OptEmpRisk$.

Via simple arithmetic, we can expand the excess risk into several terms:
\begin{align}
    \underbrace{\Risk(\EnsemblePolicy) - \OptRisk}_{\text{excess risk}}
    &= \Risk(\EnsemblePolicy) - \EmpRisk(\EnsemblePolicy, \Data) + \EmpRisk(\EnsemblePolicy, \Data) - \OptEmpRisk + \OptEmpRisk - \EmpRisk(\OptPolicy, \Data) + \EmpRisk(\OptPolicy, \Data) - \OptRisk \\
    &\leq \underbrace{\Risk(\EnsemblePolicy) - \EmpRisk(\EnsemblePolicy, \Data)}_{\text{generalization error}} + \underbrace{\EmpRisk(\EnsemblePolicy, \Data) - \OptEmpRisk}_{\text{excess empirical risk}} + \underbrace{\EmpRisk(\OptPolicy, \Data) - \OptRisk}_{\text{estimation error}} .
\label{eq:excess_risk_decomp}
\end{align}
Starting on the right, the difference $\EmpRisk(\OptPolicy, \Data) - \OptRisk$ captures our ability to \emph{estimate} the risk of a policy (in this case, the optimal policy) using a finite sample of data. Similarly, on the left, $\Risk(\EnsemblePolicy) - \EmpRisk(\EnsemblePolicy, \Data)$ captures the learning algorithm's ability to \emph{generalize} from finite data, as quantified by the difference of the risk and empirical risk---the latter of which is being optimized. Finally, the middle difference is the excess empirical risk---which is what we upper-bound in \cref{th:emp_risk_bound}.

Thus, to upper-bound the excess risk, we must bound the estimation and generalization errors. Estimation and generalization have been studied extensively in statistical learning theory, so we have many tools at our disposal to upper-bound those error terms. In the following, we provide an example bound---which is by no means optimal, but is merely meant to illustrate how to apply existing theory to complete the picture.

For simplicity, we will assume that rewards are bounded in $[-1, 1]$, and that the logged propensities are uniformly lower-bounded by some positive constant, $\LogPolicy(\dummyvar \| \dummyvar) \geq \MinPropensity > 0$. With these assumptions, we have that the random variable $\AltLossFunc(\Policy, \Context, \Action, \Propensity, \Reward) \defeq -\frac{\Reward}{\Propensity} \Policy(\Action \| \Context)$ is almost-surely bounded in $[-\MinPropensity^{-1}, \MinPropensity^{-1}]$. Note that $\Risk(\Policy) = \Ep[ \AltLossFunc(\Policy, \Context, \Action, \Propensity, \Reward) ]$ and $\EmpRisk(\Policy, \Data) \defeq \frac{1}{\DataSize} \sum_{i=1}^{\DataSize} \AltLossFunc(\Policy_i, \Context_i, \Action_i, \Propensity_i, \Reward_i)$.

Accordingly, since $\EmpRisk(\Policy, \Data)$ is just an average of bounded, i.i.d.\ random variables, we can use any applicable concentration inequality to upper-bound the estimation error. For example, Hoeffding's inequality \citep{hoeffding:jasa63} yields
\begin{equation}
    \Pr_{\Data}\left\{ \EmpRisk(\OptPolicy, \Data) - \Risk(\OptPolicy) \geq \epsilon \right\}
    \leq \exp\left(
        -\frac{\DataSize \MinPropensity^2 \epsilon^2}{2}
    \right) ;
\end{equation}
so, with probability at least $1 - \delta/2$ over draws of $\Data$,
\begin{equation}
    \EmpRisk(\OptPolicy, \Data) - \Risk(\OptPolicy)
    \leq \frac{1}{\MinPropensity} \sqrt{ \frac{2}{\DataSize} \ln\frac{2}{\delta} } .
\label{eq:estimation_error_bound}
\end{equation}

There are many tools available to bound the generalization error of a learning algorithm. We will approach it from the perspective of \emph{uniform convergence}---that is, we will show that, with high probability, the generalization error of any $\EnsemblePolicy \in \EnsemblePolicies$ is vanishing in $\DataSize$. To do so, we will leverage a standard Rademacher complexity-based generalization bound. For a generic real-valued function class, $\cG$, its \emph{Rademacher complexity} is
\begin{equation}
    \RademacherComp(\cG) \defeq \Ep\left[ \sup_{g \in \cG} \frac{1}{\DataSize} \sum_{i=1}^{\DataSize} \RademacherVar_i g(z_i) \right] ,
\label{eq:rademacher_complexity}
\end{equation}
where $z_1, \dots, z_{\DataSize}$ are i.i.d.\ draws from an arbitrary distribution, and $\RademacherVar_1, \dots, \RademacherVar_{\DataSize}$ are independent \emph{Rademacher variables}, which are uniformly distributed over $\TwoClass$. Let
\begin{equation}
    \AltLosses \defeq \AltLossFunc \circ \EnsemblePolicies = \{ (\Context, \Action, \Propensity, \Reward) \mapsto \AltLossFunc(\EnsemblePolicy, \Context, \Action, \Propensity, \Reward) : \EnsemblePolicy \in \EnsemblePolicies \}
\label{eq:alt_losses}
\end{equation}
denote the composition of $\AltLossFunc$ and $\EnsemblePolicies$, where $\AltLoss \in \AltLosses$ is a member of the class. Leveraging \citep[Theorem 3.3]{mohri:book12}, we have that, for any $\delta' \in (0, 1)$ (to be defined later), with probability at least $1 - \delta'$,
\begin{align}
    \Risk(\EnsemblePolicy) - \EmpRisk(\EnsemblePolicy, \Data)
    &\leq \sup_{\EnsemblePolicy \in \EnsemblePolicies} \Risk(\EnsemblePolicy) - \EmpRisk(\EnsemblePolicy, \Data) \\
    &= \sup_{\AltLoss \in \AltLosses} \, \Ep[ \AltLoss(\Context, \Action, \Propensity, \Reward) ] - \frac{1}{\DataSize} \sum_{i=1}^{\DataSize} \AltLoss(\Context_i, \Action_i, \Propensity_i, \Reward_i) \\
    &\leq 2 \, \RademacherComp(\AltLosses) + \frac{1}{\MinPropensity} \sqrt{\frac{2}{\DataSize} \ln\frac{1}{\delta'}} .
\label{eq:rademacher_risk_bound}
\end{align}
Note that we modified the bound to account for the range of $\AltLossFunc$. Taking $\delta' = \delta/2$, we have that \cref{eq:estimation_error_bound,eq:rademacher_risk_bound} hold with probability at least $1 - \delta$; and when combined with \cref{eq:emp_risk_bound}, we upper-bound the excess risk with high probability.

However, we can make the dependence on $\Predictors$ more explicit by focusing on the Rademacher term. First, we note that $\AltLossFunc$ is $\MinPropensity^{-1}$-Lipschitz with respect to the $2$-norm of the ensemble predictor output---that is, for any $\Ensemble, \Ensemble' \in \Ensembles$ (with associated policies, $\EnsemblePolicy, \EnsemblePolicy'$), $\Context \in \Contexts$, $\Action \in \Actions$, $\Propensity \in [\MinPropensity, 1]$ and $\Reward \in [-1, 1]$,
\begin{equation}
    \ab{\AltLossFunc(\Policy, \Context, \Action, \Propensity, \Reward) - \AltLossFunc(\Policy', \Context, \Action, \Propensity, \Reward)}
    \leq \frac{1}{\MinPropensity} \twonorm{\Ensemble(\Context) - \Ensemble'(\Context)} .
\label{eq:alt_loss_lipschitz}
\end{equation}
This is readily verified by noting that $\ab{\Reward / \Propensity} \leq \MinPropensity^{-1}$, and using the fact that the softmax function (for $\InvTemp = 1$) is $1$-Lipschitz \citep[Proposition 4]{gao:arxiv17}. Therefore, using a vector-valued extension of Talagrand's contraction lemma \citep[Corollary 1]{maurer:alt16}, we have that
\begin{align}
    \RademacherComp(\AltLosses)
    &= \Ep\left[ \sup_{\AltLoss \in \AltLosses} \frac{1}{\DataSize} \sum_{i=1}^{\DataSize} \RademacherVar_i \AltLoss(\Context_i, \Action_i, \Propensity_i, \Reward_i) \right] \\
    &= \Ep\left[ \sup_{\EnsemblePolicy \in \EnsemblePolicies} \frac{1}{\DataSize} \sum_{i=1}^{\DataSize} \RademacherVar_i \AltLossFunc(\EnsemblePolicy, \Context_i, \Action_i, \Propensity_i, \Reward_i) \right] \\
    &\leq \frac{\sqrt{2}}{\MinPropensity} \underbrace{
        \Ep\left[ \sup_{\Ensemble \in \Ensembles} \frac{1}{\DataSize} \sum_{i=1}^{\DataSize} \sum_{\Action \in \Actions} \RademacherVar_{i, \Action} \Ensemble(\Context_i, \Action) \right]
        }_{\defeq \, \RademacherComp(\Ensembles)} .
\label{eq:vector_valued_contraction}
\end{align}
We are left with the (vector-valued) Rademacher complexity of the class of ensemble predictors, $\Ensembles$. To get to the complexity of $\Predictors$, we will appeal to well known results for convex combinations of hypotheses.

Unfortunately, softmax ensemble policies are not convex combinations of predictors. Nonetheless, we can transform $\EnsemblePolicies$ into a class of convex ensembles, so that we can leverage existing Rademacher bounds. First, we will assume that $\Predictors$ is symmetric. By implication, we can assume that every ensemble weight, $\EnsembleWeight_{\Round}$, is nonnegative---since any pair, $(\Predictor, \EnsembleWeight) : \Predictor \in \Predictors, \, \EnsembleWeight < 0$, has a corresponding $\Predictor' \in \Predictors$ such that $\EnsembleWeight \Predictor = (-\EnsembleWeight) \Predictor'$. Then, we will temporarily assume that the ensemble weights have $1$-norm bounded by some constant, $\MaxWeightNorm > 0$, which will allow us to normalize the weights and thereby obtain the Rademacher complexity of convex ensembles, scaled by $\MaxWeightNorm$. Finally, we construct a covering of all $\MaxWeightNorm$, which allows us to obtain high-probability bounds that hold for all $\MaxWeightNorm$ simultaneously.

Let
\begin{equation}
    \Ensembles^{\MaxWeightNorm} \defeq \Bigg\{
        (\Context, \Action) \mapsto \Ensemble(\Context, \Action) :
        \forall \Round, \,
        \Predictor_{\Round} \in \Predictors, \,
        \EnsembleWeight_{\Round} \in \Reals_+ ; \,
        \sum_{\Round=1}^{\Rounds} \EnsembleWeight_{\Round} \leq \MaxWeightNorm
    \Bigg\}
\label{eq:bounded_ensembles}
\end{equation}
denote the set of ensembles for $\Predictors$ with nonnegative weights, whose sum is upper-bounded by $\MaxWeightNorm$. Further, let
\begin{equation}
    \ConvexEnsembles \defeq \Bigg\{
        (\Context, \Action) \mapsto \Ensemble(\Context, \Action) :
        \forall \Round, \,
        \Predictor_{\Round} \in \Predictors, \,
        \EnsembleWeight_{\Round} \in [0, 1] ; \,
        \sum_{\Round=1}^{\Rounds} \EnsembleWeight_{\Round} = 1
    \Bigg\}
\label{eq:convex_ensembles}
\end{equation}
denote the set of convex ensembles for $\Predictors$. For simplicity, we will write $\onenorm{\EnsembleWeights} \defeq \sum_{\Round=1}^{\Rounds} \ab{\EnsembleWeight_{\Round}}$, where $\EnsembleWeights \defeq (\EnsembleWeight_1, \dots, \EnsembleWeight_{\Rounds})$ denotes a vector of ensemble weights, whose length, $\Rounds$, should be clear from context. (The absolute value is unnecessary when all weights are nonnegative, but we include it for correctness.) Observe that, for any $\Ensemble \in \Ensembles$, with weights $\EnsembleWeights$, there exists a $\ConvexEnsemble \in \ConvexEnsembles$ such that $\ConvexEnsemble(\dummyvar, \dummyvar) = \frac{\Ensemble(\dummyvar, \dummyvar)}{\onenorm{\EnsembleWeights}}$ (when $\Predictors$ is assumed to be symmetric). Thus,
\begin{align}
    \RademacherComp(\Ensembles^{\MaxWeightNorm})
    &= \Ep\left[ \sup_{\Ensemble \in \Ensembles} \frac{\onenorm{\EnsembleWeights}}{\DataSize} \sum_{i=1}^{\DataSize} \sum_{\Action \in \Actions} \RademacherVar_{i, \Action} \frac{\Ensemble(\Context_i, \Action)}{\onenorm{\EnsembleWeights}} \right] \\
    &\leq \Ep\left[ \sup_{\Ensemble \in \Ensembles} \frac{\MaxWeightNorm}{\DataSize} \sum_{i=1}^{\DataSize} \sum_{\Action \in \Actions} \RademacherVar_{i, \Action} \frac{\Ensemble(\Context_i, \Action)}{\onenorm{\EnsembleWeights}} \right] \\
    &= \MaxWeightNorm \Ep\left[ \sup_{\ConvexEnsemble \in \ConvexEnsembles} \frac{1}{\DataSize} \sum_{i=1}^{\DataSize} \sum_{\Action \in \Actions} \RademacherVar_{i, \Action} \ConvexEnsemble(\Context_i, \Action) \right] \\
    &= \MaxWeightNorm \, \RademacherComp(\ConvexEnsembles) .
\end{align}
Having reduced $\RademacherComp(\Ensembles^{\MaxWeightNorm})$ to a function of $\RademacherComp(\ConvexEnsembles)$, we can leverage a classical result \citep[Theorem 12]{bartlett:jmlr03}, which states (among other things): (1) for classes $\cG$ and $\cH$, if $\cG \subseteq \cH$, then $\RademacherComp(\cG) \leq \RademacherComp(\cH)$; (2) for the \emph{convex hull} of $\cH$, denoted $\conv(\cH)$, we have $\RademacherComp(\conv(\cH)) = \RademacherComp(\cH)$. Therefore, since $\ConvexEnsembles \subseteq \conv(\Predictors)$, we have that $\RademacherComp(\ConvexEnsembles) \leq \RademacherComp(\conv(\Predictors)) = \RademacherComp(\Predictors)$; and thus, $\RademacherComp(\Ensembles^{\MaxWeightNorm}) \leq \MaxWeightNorm \, \RademacherComp(\Predictors)$.

Finally, having related $\RademacherComp(\AltLosses)$ to $\RademacherComp(\Predictors)$ when the ensemble weights are constrained to a specific sum, $\MaxWeightNorm$, we want \cref{eq:rademacher_risk_bound} to hold, with probability at least $1 - \delta/2$, for \emph{all} $\MaxWeightNorm$ simultaneously. For $j = 0, 1, 2, \dots$, let $\MaxWeightNorm_j \defeq 2^j$ and $\delta_j \defeq \frac{\delta}{4} \MaxWeightNorm_j^{-1}$. Observe that $\delta_j$ forms a geometric series, and $\sum_{j=0}^{\infty} \delta_j = \frac{\delta}{4} \sum_{j=0}^{\infty} 2^{-j} = \delta/2$. Thus, assigning probability $\delta_j$ to each $\MaxWeightNorm_j$, we have that, with probability at least $1 - \delta/2$, for all $j$ simultaneously,
\begin{align}
    \sup_{\EnsemblePolicy \in \EnsemblePolicies^{\MaxWeightNorm_j}} \Risk(\EnsemblePolicy) - \EmpRisk(\EnsemblePolicy, \Data)
    &\leq 2 \, \RademacherComp(\AltLosses^{\MaxWeightNorm_j}) + \frac{1}{\MinPropensity} \sqrt{\frac{2}{\DataSize} \ln\frac{1}{\delta_j}} \\
    &\leq \frac{\sqrt{8}}{\MinPropensity} \, \RademacherComp(\Ensembles^{\MaxWeightNorm_j}) + \frac{1}{\MinPropensity} \sqrt{\frac{2}{\DataSize} \ln\frac{1}{\delta_j}} \\
    &\leq \frac{\sqrt{8}}{\MinPropensity} \MaxWeightNorm_j \RademacherComp(\Predictors) + \frac{1}{\MinPropensity} \sqrt{\frac{2}{\DataSize} \ln\frac{1}{\delta_j}} ,
\label{eq:rademacher_risk_bound_maxweightnorm}
\end{align}
where we use superscript $\MaxWeightNorm_j$ in $\EnsemblePolicies^{\MaxWeightNorm_j}$ and $\AltLosses^{\MaxWeightNorm_j}$ to indicate that the underlying class of ensembles has weights bounded accordingly. What remains is to pick a value of $j$ for the learned ensemble policy, $\EnsemblePolicy$. Taking
\begin{equation}
    j^\star \defeq \ceil{ (\ln 2)^{-1} \ln\max\{\onenorm{\EnsembleWeights}, 1\} } ,
\label{eq:j_star}
\end{equation}
we have that
\begin{equation}
    \MaxWeightNorm_{j^\star}
    = 2^{\ceil{ (\ln 2)^{-1} \ln\max\{\onenorm{\EnsembleWeights}, 1\} }}
    \geq 2^{ (\ln 2)^{-1} \ln\max\{\onenorm{\EnsembleWeights}, 1\} }
    = \max\{\onenorm{\EnsembleWeights}, 1\}
    \geq \onenorm{\EnsembleWeights} ;
\label{eq:maxweightnorm_j_star_lb}
\end{equation}
meaning, the learned ensemble, $\Ensemble$, is contained in the class $\Ensembles^{\MaxWeightNorm_{j^\star}}$, so the bound for $j^\star$ is valid for $\EnsemblePolicy$. Further,
\begin{equation}
    \MaxWeightNorm_{j^\star}
    \leq 2^{ (\ln 2)^{-1} \ln\max\{\onenorm{\EnsembleWeights}, 1\} + 1 }
    = 2 \max\{\onenorm{\EnsembleWeights}, 1\} ,
\label{eq:maxweightnorm_j_star_ub}
\end{equation}
and
\begin{equation}
    \delta_{j^\star}^{-1}
    = \frac{4}{\delta} \MaxWeightNorm_{j^\star}
    \leq \frac{8}{\delta} \max\{\onenorm{\EnsembleWeights}, 1\} .
\label{eq:delta_j_star_lb}
\end{equation}
Putting it all together, we have that, with probability at least $1 - \delta/2$,
\begin{equation}
    \Risk(\EnsemblePolicy) - \EmpRisk(\EnsemblePolicy, \Data)
    \leq \frac{\sqrt{32}}{\MinPropensity} \max\{\onenorm{\EnsembleWeights}, 1\} \, \RademacherComp(\Predictors)
        + \frac{1}{\MinPropensity} \sqrt{\frac{2}{\DataSize} \ln\left( \frac{8}{\delta} \max\{\onenorm{\EnsembleWeights}, 1\} \right)} .
\label{eq:rademacher_risk_bound_final}
\end{equation}

Combining \cref{eq:emp_risk_bound,eq:estimation_error_bound,eq:rademacher_risk_bound_final}, we obtain a full characterization of the excess risk that holds with probability at least $1 - \delta$:
\begin{align}
    \underbrace{\Risk(\EnsemblePolicy) - \OptRisk}_{\text{excess risk}}
    ~~\leq
    &\quad \underbrace{
        \ExcessEmpRisk_0 \exp\left(
            - \frac{\NormConst}{4 \ExcessEmpRisk_0} \sum_{\Round=1}^{\Rounds} \EnsembleWeight_{\Round}^2
        \right)
    }_{\text{excess empirical risk}} \\
    &+ \underbrace{
        \frac{\sqrt{32}}{\MinPropensity} \max\{\onenorm{\EnsembleWeights}, 1\} \, \RademacherComp(\Predictors)
        + \frac{1}{\MinPropensity} \sqrt{\frac{2}{\DataSize} \ln\left( \frac{8}{\delta} \max\{\onenorm{\EnsembleWeights}, 1\} \right)}
    }_{\text{generalization error}} \\
    &+ \underbrace{
        \frac{1}{\MinPropensity} \sqrt{ \frac{2}{\DataSize} \ln\frac{2}{\delta} }
    }_{\text{est.\ error}} .
\label{eq:full_excess_risk_bound}
\end{align}
Note that the bound is stated in terms of the Rademacher complexity of the predictor class, $\Predictors$. For many useful hypothesis classes (such as decision stumps and trees), the Rademacher complexity vanishes at rate $\BigO(\DataSize^{-1/2})$. In such cases, if the excess empirical risk is small, then the excess risk also vanishes at rate $\BigO(\DataSize^{-1/2})$.

\begin{remark}
Since $\OptPolicy$ and $\EmpOptPolicy$ are defined via \emph{unconstrained} minimizations over all valid policies (not just those contained in $\EnsemblePolicies$), there could be an implicit gap between the minimum attainable risk and the risk of the best ensemble policy, $\OptEnsemblePolicy \in \argmin_{\EnsemblePolicy \in \EnsemblePolicies} \Risk(\EnsemblePolicy)$. We can make this gap explicit by modifying our excess risk decomposition: 
\begin{equation}
    \underbrace{\Risk(\EnsemblePolicy) - \OptRisk}_{\text{excess risk}}
    \leq \underbrace{\Risk(\EnsemblePolicy) - \EmpRisk(\EnsemblePolicy, \Data)}_{\text{generalization error}}
    + \underbrace{\EmpRisk(\EnsemblePolicy, \Data) - \OptEmpRisk}_{\text{excess empirical risk}}
    + \underbrace{\EmpRisk(\OptEnsemblePolicy, \Data) - \Risk(\OptEnsemblePolicy)}_{\text{estimation error}}
    + \underbrace{\Risk(\OptEnsemblePolicy) - \OptRisk}_{\text{approximation error}} .
\label{eq:alt_excess_risk_decomp}
\end{equation}
The new term---the so-called \emph{approximation error} \citep{bottou:nips07}---measures how well $\EnsemblePolicies$ fits the distribution. Unfortunately, the approximation error is unknowable in all but trivial cases, so it is typically assumed to be some small constant. The rest of the bound, when instantiated with the above analysis, would work out the same. Thus, in this case, there does not appear to be any advantage to making the approximation error explicit.
\end{remark}

\section{BASE LEARNING REDUCTIONS}
\label{sec:base_learning_reductions}

In this appendix, we derive base learners for two classes of predictors: (nonlinear) real-valued functions and binary classifiers. In both cases, we assume that the class, $\Predictors$, is \emph{symmetric}; meaning, for every $\Predictor \in \Predictors$, its negation, $-\Predictor$, is also in $\Predictors$.

For brevity, we focus on \cref{alg:boosted_policy_learning}, though it is straightforward to adapt the base learners for \cref{alg:surrogate_boosted_policy_learning}.

\subsection{Boosting via Regression}
\label{sec:boosting_via_regression}

Assume that $\Predictors$ is a symmetric class of real-valued functions, $\Predictors \subseteq \{ \Contexts \times \Actions \to \Reals \}$. Since an ensemble, $\Ensemble$, is a linear combination of predictors, it is important that $\Predictors$ is a nonlinear function class (such as regression trees), so that $\Ensemble$ can have greater expressive power than its constituents. (If the predictors were linear functions, then $\Ensemble$ would still be a linear function of its input.)

Recall the base learning objective in \cref{eq:base_learning_obj}. Since we assume that $\Predictors$ is symmetric, we can omit the absolute value; if some $\Predictor \in \Predictors$ is a minimizer of the expression inside the absolute value, then its negation, $-\Predictor$, is also in $\Predictors$. We then convert the constrained optimization problem to the following unconstrained one via Lagrangian relaxation:
\begin{equation}
    (\ref{eq:base_learning_obj})
    ~ = ~
    \max_{\Predictor \in \Predictors} \min_{\Lagrange \in \Reals} ~
        \frac{1}{\DataSize} \sum_{i=1}^{\DataSize} \frac{\Reward_i}{\Propensity_i} \EnsemblePolicy[\Round-1](\Action_i \| \Context_i) (\ActionOneHot_i - \EnsemblePolicy[\Round-1](\Context_i))\T \Predictor(\Context_i)
        - \Lagrange \left( \frac{1}{\DataSize} \sum_{i=1}^{\DataSize} \frac{\ab{\Reward_i}}{\Propensity_i} \norm{\Predictor(\Context_i)}^2 - \NormConst \right) .
\label{eq:base_learning_lagrangian}
\end{equation}
For every $\NormConst$, there exists a $\Lagrange$ that is optimal. The converse of this statement is that, for every $\Lagrange$, one can construct a $\NormConst$ for which $\Lagrange$ is optimal. Since $\NormConst$ is arbitrary, we can choose any $\Lagrange$ for the optimization. Without loss of generality, we take $\Lagrange = 1/2$, which results in
\begin{align}
    &~ \argmax_{\Predictor \in \Predictors} \sum_{i=1}^{\DataSize} \frac{\Reward_i}{\Propensity_i}
        \EnsemblePolicy[\Round-1](\Action_i \| \Context_i) (\ActionOneHot_i - \EnsemblePolicy[\Round-1](\Context_i))\T \Predictor(\Context_i) - \frac{\ab{\Reward_i}}{2 \Propensity_i} \norm{\Predictor(\Context_i)}^2 \\
    = &~ \argmax_{\Predictor \in \Predictors} \sum_{i=1}^{\DataSize} \frac{\ab{\Reward_i}}{\Propensity_i} \Big(
        \sgn(\Reward_i) \EnsemblePolicy[\Round-1](\Action_i \| \Context_i) (\ActionOneHot_i - \EnsemblePolicy[\Round-1](\Context_i))\T \Predictor(\Context_i) - \frac{1}{2} \norm{\Predictor(\Context_i)}^2
    \Big) \\
    = &~ \argmin_{\Predictor \in \Predictors} \sum_{i=1}^{\DataSize} \frac{\ab{\Reward_i}}{\Propensity_i} \norm{
        \sgn(\Reward_i) \EnsemblePolicy[\Round-1](\Action_i \| \Context_i) (\ActionOneHot_i - \EnsemblePolicy[\Round-1](\Context_i)) - \Predictor(\Context_i)
    }^2 .
\label{eq:base_learning_regression}
\end{align}
This is a weighted least-squares regression, which can be solved by a variety of off-the-shelf tools.

The resulting boosting algorithm is given in \cref{alg:boosted_policy_learning_regression}. Line 3 uses a given subroutine for solving weighted least-squares regression problems with the class $\Predictors$.

Since the regression base learner does not explicitly enforce the constraint in \cref{eq:base_learning_obj}, that $\frac{1}{\DataSize} \sum_{i=1}^{\DataSize} \frac{\ab{\Reward_i}}{\Propensity_i} \norm{\Predictor_{\Round}(\Context_i)}^2 = \NormConst$ for a given $\NormConst > 0$, one cannot immediately apply \cref{th:emp_risk_bound} to \cref{alg:boosted_policy_learning_regression}. However, recall that $\Lagrange$ implicitly ensures that $\frac{1}{\DataSize} \sum_{i=1}^{\DataSize} \frac{\ab{\Reward_i}}{\Propensity_i} \norm{\Predictor_{\Round}(\Context_i)}^2 = \NormConst'$ for some $\NormConst' > 0$; and if the base learning problem can be solved for $\NormConst'$, then it can be solved for any $\NormConst$ by simply rescaling the predictor, $\Predictor_{\Round} \gets \Predictor_{\Round} \sqrt{\NormConst / \NormConst'}$, which does not affect the predictor's optimality with respect to \cref{eq:optimal_base_predictor} if it is rescaled prior to computing $\EnsembleWeight_{\Round}$. Thus, given the output of the base learner (line 3), we compute $\NormConst'$, then perform the rescaling prior to line 4, thereby ensuring that \cref{th:emp_risk_bound} holds for $\NormConst$. All that being said, it is important to remember that this modification is not strictly necessary for the algorithm to work; only for \cref{th:emp_risk_bound} to hold.

\begin{algorithm}
    \caption{Boosted Off-Policy Learning via Regression}
    \label{alg:boosted_policy_learning_regression}
    \begin{algorithmic}[1]
    \Require{symmetric, real-valued class, $\Predictors$; solver for weighted least-squares; rounds, $\Rounds \geq 1$}
    \State $\Ensemble[0] \gets 0$
    \For{$\Round = 1, \dots, \Rounds$}
        \State $\Predictor_{\Round} \gets \argmin_{\Predictor \in \Predictors} \sum_{i=1}^{\DataSize} \sum_{\Action \in \Actions} \ImportanceWeight_i ( \PseudoLabel_{i,\Action} - \Predictor(\Context_i, \Action) )^2$
        \Comment{weighted least-squares}
        \Statex \quad\quad\quad with ~ $\ImportanceWeight_i \gets \frac{\ab{\Reward_i}}{\Propensity_i}$
        \Statex \quad\quad\quad \, and ~ $\PseudoLabel_{i, \Action} \gets \sgn(\Reward_i) \EnsemblePolicy[\Round-1](\Action_i \| \Context_i) ( \1\{ \Action = \Action_i \} - \EnsemblePolicy[\Round-1](\Action \| \Context_i) )$
        \State $\EnsembleWeight_{\Round} \gets \frac{
            2 \sum_{i=1}^{\DataSize} \frac{\Reward_i}{\Propensity_i} \EnsemblePolicy[\Round-1](\Action_i \| \Context_i) (\ActionOneHot_i - \EnsemblePolicy[\Round-1](\Context_i))\T \Predictor_{\Round}(\Context_i)
        }{
            \sum_{i=1}^{\DataSize} \frac{\ab{\Reward_i}}{\Propensity_i} \norm{\Predictor_{\Round}(\Context_i)}^2
        }$
        \State $\Ensemble[\Round] \gets \Ensemble[\Round-1] + \EnsembleWeight_{\Round} \Predictor_{\Round}$
    \EndFor
    \end{algorithmic}
\end{algorithm}

\subsection{Boosting via Binary Classification}
\label{sec:boosting_via_binary_classification}

The following reduction to binary classification is inspired by \citep[Section 7.4.3]{schapire:book12}. Assume that $\Predictors$ a symmetric class of binary classifiers, $\Predictors \subseteq \{ \Contexts \times \Actions \to \TwoClass \}$, such as decision stumps. For every example, $i \in \{1, \dots, \DataSize\}$, and action, $\Action \in \Actions$, we define a nonnegative weight,
\begin{equation}
    \ImportanceWeight_{i, \Action} \defeq \ab{
        \frac{\Reward_i}{\Propensity_i} \EnsemblePolicy[\Round-1](\Action_i \| \Context_i) ( \1\{ \Action = \Action_i \} - \EnsemblePolicy[\Round-1](\Action \| \Context_i) )
    } ,
\end{equation}
and a $\TwoClass$-valued pseudo-label,
\begin{equation}
    \PseudoLabel_{i, \Action} \defeq \sgn(\Reward_i) ( 2 \, \1\{ \Action = \Action_i \} - 1 )
    = \sgn\bigg( \frac{\Reward_i}{\Propensity_i} \EnsemblePolicy[\Round-1](\Action_i \| \Context_i) ( \1\{ \Action = \Action_i \} - \EnsemblePolicy[\Round-1](\Action \| \Context_i) ) \bigg) .
\end{equation}
(These variables are local to the current round, $\Round$, but we ignore this to simplify notation.) Using the righthand equivalence, we have that
\begin{equation}
    \ImportanceWeight_{i, \Action} \PseudoLabel_{i, \Action}
    = \frac{\Reward_i}{\Propensity_i} \EnsemblePolicy[\Round-1](\Action_i \| \Context_i) ( \1\{ \Action = \Action_i \} - \EnsemblePolicy[\Round-1](\Action \| \Context_i) ) .
\end{equation}
We also have that
\begin{equation}
    \1\{ \PseudoLabel_{i, \Action} \neq \Predictor(\Context_i, \Action) \}
    =
    \frac{1}{2} ( 1 - \PseudoLabel_{i, \Action} \Predictor(\Context_i, \Action) ) .
\end{equation}
Therefore, if we minimize the weighted classification error, we end up with the following equivalence:
\begin{align}
    &~ \argmin_{\Predictor \in \Predictors} \sum_{i=1}^{\DataSize} \sum_{\Action \in \Actions} \ImportanceWeight_{i, \Action} \1\{ \PseudoLabel_{i, \Action} \neq \Predictor(\Context_i, \Action) \} \\
    = &~ \argmin_{\Predictor \in \Predictors} \sum_{i=1}^{\DataSize} \sum_{\Action \in \Actions} \frac{\ImportanceWeight_{i, \Action}}{2} ( 1 - \PseudoLabel_{i, \Action} \Predictor(\Context_i, \Action) ) \\
    = &~ \argmax_{\Predictor \in \Predictors} \sum_{i=1}^{\DataSize} \sum_{\Action \in \Actions} \ImportanceWeight_{i, \Action} \PseudoLabel_{i, \Action} \Predictor(\Context_i, \Action) \\
    = &~ \argmax_{\Predictor \in \Predictors} \sum_{i=1}^{\DataSize} \sum_{\Action \in \Actions} \frac{\Reward_i}{\Propensity_i} \EnsemblePolicy[\Round-1](\Action_i \| \Context_i) ( \1\{ \Action = \Action_i \} - \EnsemblePolicy[\Round-1](\Action \| \Context_i) ) \Predictor(\Context_i, \Action) \\
    = &~ \argmax_{\Predictor \in \Predictors} \ab{ \frac{1}{\DataSize} \sum_{i=1}^{\DataSize} \frac{\Reward_i}{\Propensity_i} \EnsemblePolicy[\Round-1](\Action_i \| \Context_i) (\ActionOneHot_i - \EnsemblePolicy[\Round-1](\Context_i))\T \Predictor(\Context_i) } .
\end{align}
The last equality uses the symmetry of $\Predictors$ to introduce the absolute value. The base learning scale constraint, $\frac{1}{\DataSize} \sum_{i=1}^{\DataSize} \frac{\ab{\Reward_i}}{\Propensity_i} \norm{\Predictor(\Context_i)}^2 =  \NormConst$, is automatically satisfied for $\NormConst = \frac{\card{\Actions}}{\DataSize} \sum_{i=1}^{\DataSize} \frac{\ab{\Reward_i}}{\Propensity_i}$ by the fact that $\norm{\Predictor(\Context_i)}^2 = \card{\Actions}$. If there is at least one nonzero reward in the dataset, then $\NormConst > 0$. Thus, minimizing the weighted classification error is equivalent to solving the base learning objective.

The resulting algorithm is given in \cref{alg:boosted_policy_learning_binary_classification}. The optimization problem in line 3---minimizing the weighted classification error---is generally NP-hard \citep{ben-david:jcss03}. However, it is important to remember that the base learner need not fully optimize its learning objective for boosting to be successful; indeed, any predictor for which $\EnsembleWeight_{\Round} \neq 0$ reduces the empirical risk upper bound. In fact, for binary classification, we can relate this condition to a weak learning property.

Let
\begin{equation}
    \NormImportanceWeight_{i, \Action} \defeq \frac{
        \ImportanceWeight_{i, \Action}
    }{
        \sum_{i=1}^{\DataSize} \sum_{\Action \in \Actions} \ImportanceWeight_{i, \Action}
    } ,
\end{equation}
denote a \emph{normalized} weight, which defines an empirical distribution over the data passed to the base learner. Let
\begin{equation}
    \ErrorRate_{\Round}(\Predictor)
    \defeq \sum_{i=1}^{\DataSize} \sum_{\Action \in \Actions} \NormImportanceWeight_{i, \Action} \1\{ \PseudoLabel_{i, \Action} \neq \Predictor(\Context_i, \Action) \} 
\label{eq:weighted_error_rate}
\end{equation}
denote the \emph{error rate} under this distribution, and note that it is proportional to the weighted classification error. Recalling the definition of $\EnsembleWeight_{\Round}$ from \cref{eq:optimal_ensemble_weight}, note that the numerator determines the sign of the expression, and
\begin{align}
    \frac{2}{\DataSize} \sum_{i=1}^{\DataSize} \frac{\Reward_i}{\Propensity_i} \EnsemblePolicy[\Round-1](\Action_i \| \Context_i) (\ActionOneHot_i - \EnsemblePolicy[\Round-1](\Context_i))\T \Predictor_{\Round}(\Context_i)
    &= \frac{2}{\DataSize} \sum_{i=1}^{\DataSize} \sum_{\Action \in \Actions} \ImportanceWeight_{i, \Action} \PseudoLabel_{i, \Action} \Predictor_{\Round}(\Context_i, \Action) \\
    &\propto \sum_{i=1}^{\DataSize} \sum_{\Action \in \Actions} \NormImportanceWeight_{i, \Action} \PseudoLabel_{i, \Action} \Predictor_{\Round}(\Context_i, \Action) \\
    &= \sum_{i=1}^{\DataSize} \sum_{\Action \in \Actions} \NormImportanceWeight_{i, \Action} ( 1 - 2 \, \1\{ \PseudoLabel_{i, \Action} \neq \Predictor_{\Round}(\Context_i, \Action) \} ) \\
    &= 1 - 2 \ErrorRate_{\Round}(\Predictor_{\Round}) .
\end{align}
Thus, $\EnsembleWeight_{\Round}$ is positive when $\ErrorRate_{\Round}(\Predictor_{\Round}) < 1/2$, negative when $\ErrorRate_{\Round}(\Predictor_{\Round}) > 1/2$, and zero when $\ErrorRate_{\Round}(\Predictor_{\Round}) = 1/2$. Recall that boosting can proceed as long as $\EnsembleWeight_{\Round} \neq 0$; meaning, as long as the base learner can produce a classifier that performs better than random guessing under the weighted distribution. This is the weak learning condition for binary classification base learners. If the base learner \emph{always} satisfies this condition, and there exists a constant, $\Advantage \in [0, 1/2)$, such that $\ErrorRate_{\Round}(\Predictor_{\Round}) < \Advantage$ at every round, then (appealing to \cref{th:emp_risk_bound}) the excess empirical risk decays exponentially fast.

\begin{algorithm}
    \caption{Boosted Off-Policy Learning via Binary Classification}
    \label{alg:boosted_policy_learning_binary_classification}
    \begin{algorithmic}[1]
    \Require{symmetric, $\TwoClass$-valued class, $\Predictors$; learning algorithm for weighted binary classification; rounds, $\Rounds \geq 1$}
    \State $\Ensemble[0] \gets 0$
    \For{$\Round = 1, \dots, \Rounds$}
        \State $\Predictor_{\Round} \gets \argmin_{\Predictor \in \Predictors}
            \sum_{i=1}^{\DataSize} \sum_{\Action \in \Actions} \ImportanceWeight_{i, \Action} \1\{ \PseudoLabel_{i, \Action} \neq \Predictor(\Context_i, \Action) \}$
        \Comment{weighted binary classification}
        \Statex \quad\quad\quad with ~
        $\ImportanceWeight_{i, \Action} \gets \ab{
            \frac{\Reward_i}{\Propensity_i} \EnsemblePolicy[\Round-1](\Action_i \| \Context_i) ( \1\{ \Action = \Action_i \} - \EnsemblePolicy[\Round-1](\Action \| \Context_i) )
        }$
        \Statex \quad\quad\quad \, and ~
        $\PseudoLabel_{i, \Action} \gets \sgn(\Reward_i) ( 2 \, \1\{ \Action = \Action_i \} - 1 )$
        \State $\EnsembleWeight_{\Round} \gets \frac{
            2 \sum_{i=1}^{\DataSize} \frac{\Reward_i}{\Propensity_i} \EnsemblePolicy[\Round-1](\Action_i \| \Context_i) (\ActionOneHot_i - \EnsemblePolicy[\Round-1](\Context_i))\T \Predictor_{\Round}(\Context_i)
        }{
            \sum_{i=1}^{\DataSize} \frac{\ab{\Reward_i}}{\Propensity_i} \norm{\Predictor_{\Round}(\Context_i)}^2
        }$
        \State $\Ensemble[\Round] \gets \Ensemble[\Round-1] + \EnsembleWeight_{\Round} \Predictor_{\Round}$
    \EndFor
    \end{algorithmic}
\end{algorithm}

\section{REGULARIZATION}
\label{sec:regularization}

Recall that our original goal is to find a policy with low \emph{expected} risk. Though the IPS estimator is unbiased, optimizing the empirical risk can sometimes lead to overfitting---e.g., if the class of policies is very rich. Accordingly, it is common to add some form of regularization to the learning objective, to penalize model complexity. In our boosting framework, we can assume an ensemble regularizer that decomposes as a sum of base regularizers, $\Regularizer(\Ensemble) = \sum_{\Round=1}^{\Rounds} \Regularizer_{\Round}(\Predictor_{\Round})$. Then, regularization can be applied at the base learner by adding $\Regularizer_{\Round}(\Predictor)$ to line 3 of \cref{alg:boosted_policy_learning,alg:surrogate_boosted_policy_learning}.

Recognizing that the ensemble's complexity is partially determined by the magnitudes of the ensemble weights, we could also choose to regularize the ensemble weights \citep{duchi:icml09}. Note that if we penalize the squared norm of the ensemble weights, adding $\RegParam \sum_{\Round=1}^{\Rounds} \EnsembleWeight_{\Round}^2$ to the learning objective (\cref{eq:policy_learning_objective}), then the closed-form expression for the ensemble weights (using our smoothness-based derivation) would be inversely proportional to the regularization parameter, $\RegParam$. From the perspective of functional gradient descent, $\RegParam > 0$ has the effect of \emph{shrinking} the learning rate, with larger values leading to smaller step sizes.

\section{DATASET DETAILS}
\label{sec:dataset_details}

Covertype is a multiclass classification dataset, consisting of $581{,}012$ records with $54$ binary, ordinal and real-valued features. The task is to predict one of $7$ classes of ground cover for each record. Fashion-MNIST is a multiclass image classification dataset, consisting of $70{,}000$ grayscale images from $10$ categories of apparel and accessories. We extract features from each image by flattening the $(28 \times 28)$-pixel grid to a $784$-dimensional vector. The Scene dataset is a multilabel image classification dataset, consisting of $2{,}407$ records with $294$ numeric features, derived from the images' spatial color moments in LUV space. The task is to determine which of $6$ settings are depicted in the image. Lastly, TMC2007-500 is a multilabel document classification dataset, consisting of $28{,}596$ airplane failure reports. Each record consists of $500$ binary features, indicating the presence or absence of the $500$ most frequent words in the corpus. The task is to determine which of $22$ types of problems are described in the document.

\subsection{Feature Scaling}
\label{sec:feature_scaling}

As feature ranges are important to deep learning (but not boosted tree ensembles), for baselines DRR and BanditNet, we performed some feature scaling. In Covertype, we standardized the ordinal and real-valued features; and in Fashion-MNIST, we normalized the pixel intensities to $[0, 1]$. The Scene and TMC2007-500 datasets' features are already constrained to $[0, 1]$, so no additional feature scaling is necessary.

\subsection{Data Splits}
\label{sec:data_splits}

Of the four datasets, only Fashion-MNIST has a standard training/testing split ($60{,}000$ training; $10{,}000$ testing), which we preserve. We withold a random $10\%$ of the training data for validation. For Covertype, we use a random $80\%/10\%/10\%$ training/validation/testing split. For Scene, we first perform a random $80\%/20\%$ split of data into training and testing partitions, then another $80\%/20\%$ split of the training data into training and validation partitions, ultimately resulting in a $64\%/16\%/20\%$ training/validation/testing split. We perform the same procedure for TMC2007-500 using $90\%/10\%$ splits, resulting in a $81\%/9\%/10\%$ training/validation/testing split. The final dataset sizes are summarized in \cref{tab:dataset_details}.

Note that all of these splits are prior to the supervised-to-bandit conversion (described below), so we have full information for evaluating performance metrics.

\begin{table}[t]
    \caption{Details of the datasets used in our experiments. \emph{Task-types} ``MC'' and ``ML'' denote multiclass and multilabel, respectively. \emph{Logging Training} refers to the number of examples used to train the logging policy, and \emph{Target Training} is the number of simulated bandit feedback examples used to train the target policy.}
    \centering
    \begin{tabular}{lccccccc}
    \toprule
    \bf{Dataset} & \bf{Task-type} & \bf{Features} & \bf{Classes} & \bf{Logging Training} & \bf{Target Training} & \bf{Validation} & \bf{Testing} \\
    \midrule
    \bf{Covertype} & MC & $54$ & $7$ & $46{,}481$ & $418{,}329$ & $58{,}101$ & $58{,}101$ \\
    \bf{Fashion-MNIST} & MC & $784$ & $10$ & $5{,}400$ & $48{,}600$ & $6{,}000$ & $10{,}000$ \\
    \bf{Scene} & ML & $294$ & $6$ & $154$ & $1{,}387$ & $385$ & $481$ \\
    \bf{TMC2007-500} & ML & $500$ & $22$ & $2{,}316$ & $20{,}846$ & $2{,}574$ & $2{,}860$ \\
    \bottomrule
    \end{tabular}
    \label{tab:dataset_details}
\end{table}

\subsection{Supervised-to-Bandit Conversion}
\label{sec:supervised_to_bandit}

To simulate logged bandit feedback from supervised datasets, we use the procedure proposed by \citet{beygelzimer:kdd09}, which has become standard. We start by randomly sampling $10\%$ of the training examples (without replacement) to train a softmax logging policy using supervised learning---in this case, multinomial logistic regression. To avoid overfitting such a small dataset---which could cause the logging policy to become overly confident in certain contexts---we turn up the regularization and employ early stopping.  We then use the logging policy to sample a label (i.e., action) for each remaining training example. In some cases, we enforce a minimum probability for each action by mixing the softmax probabilities with $\epsilon$-greedy exploration. For each sampled label, we record its propensity (under the logging policy) and corresponding reward.

We repeat this procedure $10$ times, using $10$ random splits of the training data, to generate $10$ datasets of logged contexts, actions, propensities and rewards.

\subsection{Task Reward Structure}
\label{sec:task_reward_structure}

For the multilabel datasets, Scene and TMC2007, we assign a full reward of one if the selected action matches one of the true labels, and zero otherwise. For the multiclass datasets, Fashion-MNIST and Covertype, we assign full reward if the true label was selected, but give partial credit if the selected action belongs to a ``near miss'' category. For instance, in Fashion-MNIST, selecting \emph{T-shirt} instead of \emph{shirt} yields a reward of $1/4$. And in Covertype, partial credit is given when the selected action predicts a class of vegetation from the same genus as the actual type; e.g., predicting \emph{aspen} instead of \emph{cottonwood} tree yields a reward of $1/4$.

We define the partial credit class-groupings for Fashion-MNIST as follows:
\begin{itemize}
  \item $\textit{Outerwear} = \{ \textit{coat}, \, \textit{pullover} \}$
  \item $\textit{Shirts} = \{ \textit{T-shirt/top}, \, \textit{shirt} \}$
  \item $\textit{Footwear} = \{ \textit{sandal}, \, \textit{sneaker}, \, \textit{ankle-boot} \}$
\end{itemize}
The rest of the classes (`\textit{trouser}', `\textit{dress}', and `\textit{bag}'), are left as singleton groups with no possibility for partial credit.

Our partial credit groupings for Covertype are:
\begin{itemize}
  \item $\textit{Firs} = \{ \textit{Spruce/Fir}, \, \textit{Douglas-fir} \}$
  \item $\textit{Pines} = \{ \textit{Lodgepole}, \, \textit{Ponderosa} \}$
  \item $\textit{Populus} = \{ \textit{Cottonwood/Willow}, \, \textit{Aspen} \}$
\end{itemize}
The remaining class, `\textit{Krummholz}', is placed in a singleton group.

\subsection{Representation of Contextualized Actions}
\label{sec:contextualized_actions}

For all datasets and methods (except for the logging policy), we construct feature vectors for actions by augmenting the original features in the dataset with an encoding of each action. We accomplish this by concatenating the original features, $\vec{x}$ with a one-hot vector, $\ActionOneHot$, identifying each action, $\Action$; that is $(\vec{x}, \Action) \mapsto [\vec{x}; \ActionOneHot]$, for all $\Action \in \Actions$. This results in $\card{\Actions}$ feature vectors for each example in the dataset. A policy therefore scores all actions (using its predictor or ensemble) and then uses the scores to select an action (either by softmax sampling or argmax).

\section{ALGORITHM IMPLEMENTATION DETAILS}
\label{sec:algo_impl_details}

Following are some relevant implementation details of the algorithms compared in \cref{sec:experiments}.

\begin{itemize}
    \item \textbf{BRR-gb} uses our own implementation of ``vanilla" gradient boosting, with XGBoost as the base learner, configured to fit a single regression tree using the ``exact" splitting algorithm.
    \item \textbf{BRR-xgb} uses XGBoost's implementation of gradient boosted regression trees, with the ``exact" splitting algorithm.
    \item \textbf{DRR} and \textbf{BanditNet} are implemented in MXNet \citep{mxnet}. Each neural network starts with a hidden layer of a given width. For each successive hidden layer (up to the given depth), the width is halved, unless a minimum width of $32$ is reached. Batch normalization \citep{ioffe:icml15} and ReLU activations are used for all hidden layers. For training, we use AdaGrad \citep{duchi:jmlr11} with early stopping on the training reward (estimated via self-normalized IPS \citep{swaminathan:nips15}) and a ``patience" of $10$ epochs.
    \item \textbf{BOPL} and \textbf{BOPL-S} use our own implementations. The \textbf{-regr} variants use XGBoost as the base learner, configured to fit a single regression tree using the ``exact" splitting algorithm. The \textbf{-class} variants use Scikit-learn's \citep{scikit-learn} decision tree classifier as the base learner. We use early stopping whenever the gradient, base predictions or ensemble weight are less than $10^{-10}$ in magnitude.
\end{itemize}

It is important to note that fitting a single regression tree in XGBoost is equivalent (modulo optimizations) to fitting one in a dedicated tree learner (such as Scikit-learn's). This equivalence is unique to the squared error loss, since XGBoost's second-order Taylor approximation is, in this case, exact.

All hyperparameters are tuned via random search and evaluated against held-out validation data for each dataset. \cref{tab:hp_ranges} catalogs the hyperparameters and their associated ranges; \cref{tab:selected_hps} catalogs the values that were selected for each dataset.

\begin{table}[t]
    \caption{Hyperparameter ranges for each method and dataset.}
    \centering
    \begin{tabular}{l l p{.12\textwidth} p{.12\textwidth} p{.12\textwidth} p{.15\textwidth}}
    \toprule
    \bf{Method} & \bf{Parameter} & \bf{Covertype} & \bf{Fashion} & \bf{Scene} & \bf{TMC2007-500} \\
    \midrule
    \bf{BRR-gb}
        & rounds & $[200, 600]$ & $[100, 500]$ & $[100, 1000],$ & $[100, 300]$ \\
        & max\_depth & $[6, 25]$ & $[4, 20]$ & $[5, 14]$ & $[6, 20]$ \\
        & min\_child\_weight & $[2, 400]$ & $[2, 640]$ & $[5, 120]$ & $[2, 200]$ \\
        & reg\_lambda & $[0]$ & $[0]$ & $[0, .05]$ & $[0]$ \\
        & learning\_rate & $[0.01, 0.1]$ & $[0.1, 2]$ & $[0.0001, 0.3]$ & $[0.01, 0.1]$ \\
    \midrule
    \bf{BRR-xgb}
        & rounds (n\_estimators) & $[400, 600]$ & $[100, 500]$ & $[100, 2000]$ & $[100, 300]$ \\
        & max\_depth & $[6, 25]$ & $[10, 20]$ & $[3, 10]$ & $[6, 20]$ \\
        & min\_child\_weight & $[2, 200]$ & $[40, 320]$ & $[5, 120]$ & $[2, 200]$ \\
        & reg\_lambda & $[0, 1.0]$ & $[0, 0.1]$ & $[0, 0.05]$ & $[0, 1.0]$ \\
        & learning\_rate & $[0.01, 0.1]$ & $[0.05, 0.1]$ & $[0.001, 0.1]$ & $[0.01, 0.1]$ \\
    \midrule
    \bf{DRR}
        & num\_hidden\_layers & $[1, 8]$ & $[1, 8]$ & $[1, 8]$ & $[1, 8]$ \\
        & first\_layer\_width & $[32, 256]$ & $[32, 512]$ & $[32, 512]$ & $[32, 512]$ \\
        & dropout & $[0.1, 0.2]$ & $[0.1, 0.2]$ & $[0.1, 0.2]$ & $[0.1, 0.2]$ \\
        & weight\_decay & $[0, 0.001]$ & $[0, 0.001]$ & $[0, 0.001]$ & $[0, 0.001]$ \\
        & learning\_rate & $[0.01, 0.1]$ & $[0.01, 0.1]$ & $[0.01, 0.1]$ & $[0.01, 0.1]$ \\
        & epochs & $[116]$ & $[1000]$ & $[1000]$ & $[1000]$ \\
        & batch\_size & $[1000]$ & $[100]$ & $[100]$ & $[100]$ \\
    \midrule
    \bf{BanditNet}
        & num\_hidden\_layers & $[1, 8]$ & $[1, 8]$ & $[1, 8]$ & $[1, 8]$ \\
        & first\_layer\_width & $[32, 256]$ & $[32, 512]$ & $[32, 512]$ & $[32, 512]$ \\
        & dropout & $[0.1, 0.2]$ & $[0.1, 0.2]$ & $[0.1, 0.2]$ & $[0.1, 0.2]$ \\
        & weight\_decay & $[0, 0.001]$ & $[0, 0.001]$ & $[0, 0.001]$ & $[0, 0.001]$ \\
        & learning\_rate & $[0.01, 0.1]$ & $[0.01, 0.1]$ & $[0.01, 0.1]$ & $[0.01, 0.1]$ \\
        & epochs & $[116]$ & $[1000]$ & $[1000]$ & $[1000]$ \\
        & batch\_size & $[1000]$ & $[100]$ & $[100]$ & $[100]$ \\
        & reward\_translation & $[-0.5 0]$ & $[-0.5 0]$ & $[-0.6 0]$ & $[-0.5 0]$ \\
    \midrule
    \bf{BOPL-regr}
        & rounds & $[200, 600]$ & $[150, 250]$ & $[100, 1000]$ & $[100, 300]$ \\
        & max\_depth & $[10, 25]$ & $[10, 20]$ & $[5, 14]$ & $[6, 20]$ \\
        & min\_child\_weight & $[2, 400]$ & $[100, 640]$ & $[5, 120]$ & $[2, 200]$ \\
        & reg\_lambda & $[0]$ & $[0]$ & $[0, 0.05]$ & $[0]$ \\
        & reward\_translation & $[-0.4, -0.3]$ & $[-0.5, -0.3]$ & $[-0.6 0]$ & $[-0.5, -0.3]$ \\
    \midrule
    \bf{BOPL-class}
        & rounds & $[300]$ & $[150, 250]$ & $[500, 1000]$ & $[200]$ \\
        & max\_depth & $[15, 25]$ & $[15, 25]$ & $[6, 14]$ & $[15, 20]$ \\
        & min\_child\_weight & $[50, 100]$ & $[10, 50]$ & $[60, 240]$ & $[10, 50]$ \\
        & reward\_translation & $[-0.2]$ & $[-0.2]$ & $[-0.4, -0.2]$ & $[-0.15, -0.1]$ \\
    \midrule
    \bf{BOPL-S-regr}
        & rounds & $[400, 600]$ & $[100, 250]$ & $[100, 1000]$ & $[100, 300]$ \\
        & max\_depth & $[10, 25]$ & $[10, 20]$ & $[5, 14]$ & $[6, 20]$ \\
        & min\_child\_weight & $[2, 400]$ & $[100, 640]$ & $[5, 120]$ & $[2, 200]$ \\
        & reg\_lambda & $[0]$ & $[0]$ & $[0, 0.05]$ & $[0]$ \\
        & reward\_translation & $[-0.4, -0.3]$ & $[-0.5, -0.3]$ & $[-0.6 0]$ & $[-0.5, -0.3]$ \\
    \midrule
    \bf{BOPL-S-class}
        & rounds & $[300]$ & $[150, 250]$ & $[500, 1000]$ & $[200]$ \\
        & max\_depth & $[15, 25]$ & $[15, 25]$ & $[6, 14]$ & $[15, 20]$ \\
        & min\_child\_weight & $[50, 100]$ & $[10, 50]$ & $[60, 240]$ & $[10, 50]$ \\
        & reward\_translation & $[-0.2]$ & $[-0.2]$ & $[-0.4, -0.2]$ & $[-0.15, -0.1]$ \\
    \bottomrule
    \end{tabular}
    \label{tab:hp_ranges}
\end{table}

\begin{table}[t]
    \caption{Selected hyperparameters for each method and dataset.}
    \centering
    \begin{tabular}{l l p{.12\textwidth} p{.12\textwidth} p{.12\textwidth} p{.15\textwidth}}
    \toprule
    \bf{Method} & \bf{Parameter} & \bf{Covertype} & \bf{Fashion} & \bf{Scene} & \bf{TMC2007-500} \\
    \midrule
    \bf{BRR-gb}
        & rounds & $600$ & $250$ & $1000$ & $100$ \\
        & max\_depth & $20$ & $10$ & $12$ & $10$ \\
        & min\_child\_weight & $2$ & $100$ & $20$ & $2$ \\
        & reg\_lambda & $0$ & $0$ & $0.01$ & $0$ \\
        & learning\_rate & $0.1$ & $0.1$ & $0.01$ & $0.1$ \\
    \midrule
    \bf{BRR-xgb}
        & rounds (n\_estimators) & $400$ & $500$ & $2000$ & $200$ \\
        & max\_depth & $25$ & $15$ & $8$ & $10$ \\
        & min\_child\_weight & $10$ & $100$ & $20$ & $2$ \\
        & reg\_lambda & $1$ & $0$ & $0.05$ & $1$ \\
        & learning\_rate & $0.1$ & $0.1$ & $0.005$ & $0.1$ \\
    \midrule
    \bf{DRR}
        & num\_hidden\_layers & $4$ & $4$ & $2$ & $1$ \\
        & first\_layer\_width & $256$ & $512$ & $512$ & $256$ \\
        & dropout & $0.1$ & $0.1$ & $0.2$ & $0.1$ \\
        & weight\_decay & $0$ & $10^{-6}$ & $10^{-5}$ & $10^{-6}$ \\
        & learning\_rate & $0.01$ & $0.01$ & $0.01$ & $0.01$ \\
        & epochs & $116$ & $1000$ & $1000$ & $1000$ \\
        & early\_stopping\_patience & $10$ & $10$ & $10$ & $10$ \\
        & batch\_size & $1000$ & $100$ & $100$ & $100$ \\
    \midrule
    \bf{BanditNet}
        & num\_hidden\_layers & $4$ & $4$ & $4$ & $2$ \\
        & first\_layer\_width & $256$ & $512$ & $512$ & $512$ \\
        & dropout & $0.1$ & $0.1$ & $0.1$ & $0.1$ \\
        & weight\_decay & $10^{-4}$ & $10^{-5}$ & $10^{-4}$ & $10^{-3}$ \\
        & learning\_rate & $0.01$ & $0.01$ & $0.01$ & $0.01$ \\
        & epochs & $116$ & $1000$ & $1000$ & $1000$ \\
        & early\_stopping\_patience & $10$ & $10$ & $10$ & $10$ \\
        & batch\_size & $1000$ & $100$ & $100$ & $100$ \\
        & reward\_translation & $-0.4$ & $-0.42$ & $-0.2$ & $-0.2$ \\
    \midrule
    \bf{BOPL-regr}
        & rounds & $600$ & $250$ & $500$ & $300$ \\
        & max\_depth & $25$ & $20$ & $12$ & $20$ \\
        & min\_child\_weight & $200$ & $200$ & $60$ & $200$ \\
        & reg\_lambda & $0$ & $0$ & $0.01$ & $0$ \\
        & reward\_translation & $-0.4$ & $-0.41$ & $-0.2$ & $-0.3$ \\
    \midrule
    \bf{BOPL-class}
        & rounds & $300$ & $150$ & $1000$ & $200$ \\
        & max\_depth & $25$ & $25$ & $14$ & $20$ \\
        & min\_child\_weight & $50$ & $50$ & $60$ & $50$ \\
        & reward\_translation & $-0.2$ & $-0.2$ & $-0.35$ & $-0.1$ \\
    \midrule
    \bf{BOPL-S-regr}
        & rounds & $600$ & $250$ & $100$ & $300$ \\
        & max\_depth & $25$ & $20$ & $10$ & $20$ \\
        & min\_child\_weight & $200$ & $200$ & $60$ & $200$ \\
        & reg\_lambda & $0$ & $0$ & $0$ & $0$ \\
        & reward\_translation & $-0.4$ & $-0.4$ & $-0.2$ & $-0.3$ \\
    \midrule
    \bf{BOPL-S-class}
        & rounds & $300$ & $150$ & $1000$ & $200$ \\
        & max\_depth & $25$ & $25$ & $14$ & $20$ \\
        & min\_child\_weight & $50$ & $50$ & $120$ & $50$ \\
        & reward\_translation & $-0.2$ & $-0.2$ & $-0.2$ & $-0.1$ \\
    \bottomrule
    \end{tabular}
    \label{tab:selected_hps}
\end{table}

\end{document}